\documentclass{article} % For LaTeX2e
\usepackage{iclr2027_conference,times}

\usepackage{amsmath,amsfonts,bm}

\def\secref#1{Section~\ref{#1}}
\def\eqref#1{Eq.~\ref{#1}}
\def\Eqref#1{Equation~\ref{#1}}
\def\1{\bm{1}}

\DeclareMathAlphabet{\mathsfit}{\encodingdefault}{\sfdefault}{m}{sl}
\SetMathAlphabet{\mathsfit}{bold}{\encodingdefault}{\sfdefault}{bx}{n}

\usepackage{url}
\usepackage{xcolor}
\usepackage{graphicx}
\usepackage{enumitem}
\usepackage{amsmath}
\usepackage{amssymb}
\usepackage{wrapfig}
\usepackage{mathtools}
\usepackage{amsthm}
\usepackage{hyperref}
\usepackage{url}
\usepackage{booktabs}
\usepackage{subcaption}
\usepackage{multirow}
\usepackage{bbm}
\usepackage[table]{xcolor}
\definecolor{highlight}{RGB}{235, 240, 255}
\usepackage{mathrsfs}
\usepackage{listings}
\usepackage{titletoc}
\usepackage{algorithm}
\usepackage{algpseudocode}

\theoremstyle{plain}
\newtheorem{theorem}{Theorem}[section]
\newtheorem{proposition}[theorem]{Proposition}
\newtheorem{lemma}[theorem]{Lemma}

\theoremstyle{definition}

\newtheorem{assumption}[theorem]{Assumption}
\theoremstyle{remark}
\newtheorem{remark}[theorem]{Remark}

\title{JOVE: Joint Execution and Verification for Resource-Aware LLM Task Graphs}
\author{Haoran Zhang$^1$, Dongjun Kim$^{1}$, Seohyeon Cha$^{1}$, Kevin S Chan$^{2}$, Ananthram Swami$^{2}$\\ 
\textbf{Gustavo De Veciana$^{1}$, Haris Vikalo$^{1}$} \\
$^{1}$University of Texas at Austin \quad 
$^{2}$DEVCOM Army Research Laboratory
}
\iclrfinalcopy

\begin{document}

\maketitle
\lhead{Preprint}

% Sep 21 comments
% Consider replace "verification"
% highlight that we don't know any final response correctness, and our feedback is paid, unlike common RL/multi-arm bandits that action automatically receives feedback. (done)
% provide graph and latency stationary or iid assumptions
% how latency is set
% mu_t, delta_t in assumptions to be positive
% reference model, how we set the budget and latency constraints: choose a aggresive/simple dataset
\vspace{-0.5cm}
\begin{abstract}
Complex reasoning queries can be decomposed into directed acyclic task graphs and distributed across heterogeneous LLMs, reducing latency through parallelism and enabling smaller models to solve complex tasks.
In practice, however, the suitability of an LLM for a given subtask may be a priori unknown, and execution alone does not reveal output correctness.
We propose JOVE, an online framework that jointly assigns executor LLMs and selects intermediate outputs for paid verification.
Verification runs asynchronously and is used to improve future allocations, so the system must balance spending on execution now against learning for later.
We study how to optimize this trade-off under a long-term budget and a per-query latency constraint, with stochastic, initially unknown LLM service quality, invocation costs, and execution times. JOVE makes execution and verification decisions by solving a sequence of per-query mixed-integer linear programs. Online learning updates task-dependent estimates of LLM quality based on verification feedback, while an information-gain bonus incorporates the value of learning into allocation decisions. 
Under a natural set of assumptions, we establish sublinear quality-learning regret for JOVE. 
Across four reasoning benchmarks, JOVE achieves competitive accuracy against standard inference baselines while reducing average cost and latency by at least 3.17 times.
Code: \href{https://anonymous.4open.science/r/JOVE/README.md}{link}. 
\end{abstract}

\section{Introduction}
Large language models (LLMs) offer different combinations of capability, specialization, and serving cost.
This heterogeneity has motivated \emph{LLM routing}: instead of sending every query to the strongest model, a router might select a less expensive model when that is likely to suffice~\citep{chen2023frugalgpt,ong2024routellm,wang2025mixllm}.
Model selection is of particular interest for complex reasoning queries, which often benefit from an agentic workflow that decomposes the original problem into smaller, interdependent subtasks. A natural abstraction is \emph{graph-based inference}: a planner decomposes a query into a directed acyclic graph (DAG) whose nodes represent subtasks and whose edges indicate dependencies between them. The node outputs are then used to construct the final response. 
This structure enables lightweight models to handle individual subtasks and reduces latency through parallel execution of independent branches.
% while allowing each subtask to be assigned to an LLM based on difficulty and cost. 
% In this paper, we study how to allocate LLM API calls within such task graphs under cost and latency constraints.
% It is suitable for edge devices, where a lightweight local planner can orchestrate heterogeneous remote APIs without hosting multiple LLMs on-device.

Prior work spans three directions. (1) Graph-structured decoding organizes intermediate reasoning or exposes parallelism, typically with a fixed LLM~\citep{ning2024sot,jin2025plato}.
(2) LLM routing selects models per query or through sequential interactions~\citep{chen2023frugalgpt,zhang2025routerr1}. 
(3) Graph-aware LLM systems coordinate different model calls within task graphs~\citep{kim2024llmcompiler,dong2026hybridflow}.
Existing routing and graph-aware allocation methods typically learn from offline data or assume free model-quality feedback after each call. However, in practice, obtaining such feedback may require a judge model or human evaluator at an extra cost.
\textbf{In this paper, we therefore ask:} given a query and its generated task graph, how should a resource-constrained system jointly select executor LLMs and decide which node outputs to verify, balancing current-query performance against paid verification that improves future allocations?

% \footnote{Our framework can also incorporate non-LLM tools; we discuss this extension in Appendix~\ref{app}.}

\begin{figure}
    \centering
    \includegraphics[width=0.87\linewidth]{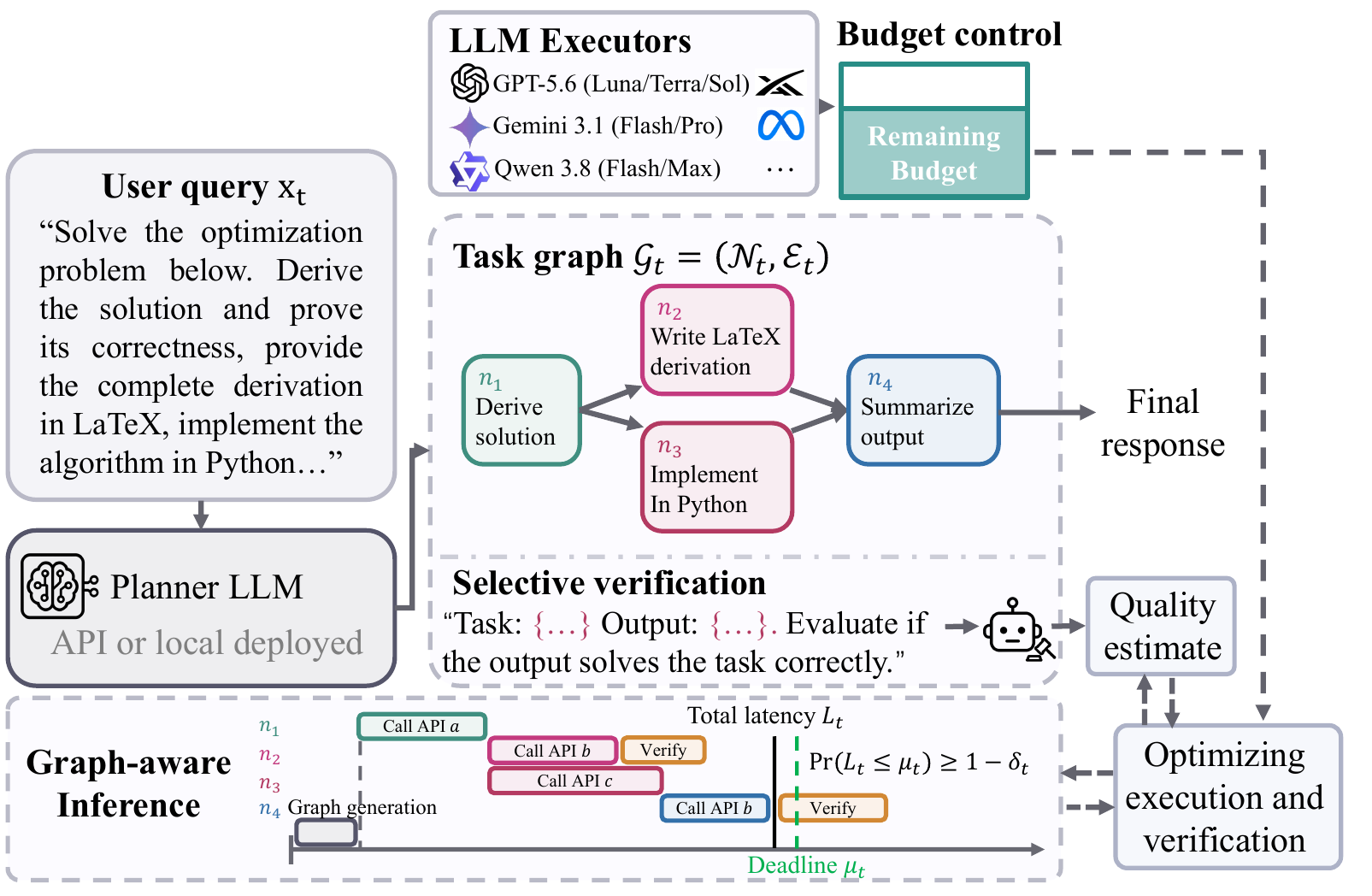}
    \caption{Overview of JOVE. A planner decomposes each query into a task graph and identifies candidate executor LLMs for each node. JOVE jointly assigns executors and selects outputs for verification, subject to a long-term execution/verification budget and latency constraints. Verification proceeds asynchronously without delaying the current response, improving model-quality estimates.}
    \label{fig:overview}
\vspace{-0.2cm}
\end{figure}

\textbf{Challenges and our design.}
We study the problem under two practical resource constraints: a long-term average \textit{API-cost budget} shared across queries, which offers the flexibility to absorb cost fluctuations over rounds, and \textit{delivery deadlines} set per query according to its urgency and difficulty.
(1) The first challenge is allocating heterogeneous LLMs under uncertain model quality, cost, and latency while satisfying both constraints. 
The budget limit couples decisions across queries: spending more on difficult queries requires saving on others. 
The deadline couples decisions within each task graph: choosing a slower executor at one node can restrict feasible choices downstream, even when it is locally favorable in quality and cost.
(2) The second challenge is obtaining sufficiently granular model-quality feedback without wasting resources by verifying every output.
A model call reveals its cost and latency but not output correctness, and the final response feedback may be too coarse to attribute errors to individual node-model assignments.
We therefore introduce \textbf{selective node-level verification} to provide feedback on individual outputs at an extra cost.
Each node contains a narrowly scoped subtask and an explicit success criterion. 
After a selected node completes, its subtask, outputs, and criterion are sent to a fixed verifier (a human evaluator or a judge model), which returns a binary correctness label. This label updates the estimated quality of the selected model for similar subtasks, informing future allocations. Verification is not required to construct the current response and can proceed asynchronously after node completion. We therefore assume that it does not delay response delivery, although its cost counts toward the long-term query cost. 
% budget, but its latency is excluded from the current query's response time. 

In this paper, we propose \textbf{JOVE}, \textbf{J}oint \textbf{O}ptimization of \textbf{V}erification and \textbf{E}xecution, a framework that jointly selects executor LLMs
% \footnote{An extension to include non-LLM executors is discussed in Appendix \ref{app:nonllm}.} 
and outputs to verify under system resource constraints (Figure~\ref{fig:overview}). We formulate the problem as an online constrained optimization problem that maximizes average node correctness subject to a long-term average API-cost budget and a probabilistic latency limit for each query. JOVE makes allocation decisions by solving per-query mixed-integer linear programs (MILPs), using online estimates of model quality, cost, and latency, with an information-gain bonus that values verification feedback for future allocations.
% We further characterize its exploration behavior and establish a sublinear regret guarantee under the stated assumptions.
Our main contributions are:
\begin{itemize}[wide]
    \item \textbf{Joint execution--verification formulation.}
    % We formulate the joint selection of executor LLMs and outputs for verification in task-graph inference with initially unknown model quality, cost, and latency, under a long-term API-cost budget and per-query latency chance constraint. 
    We formulate task-graph inference as an online problem of jointly choosing executor LLMs and which outputs to verify, with initially unknown model quality, cost, and latency, under a long-term cost budget and a per-query latency chance constraint.
    \item \textbf{Verification-aware online allocation.}
    % We develop JOVE, a per-query MILP policy that uses online quality estimates and an information-gain bonus to jointly select executor models and outputs to verify. Under stated assumptions, we establish sublinear quality-learning regret when the verification weight lies within a specified interval.
    We develop JOVE, a per-query MILP policy combining online quality estimates with an information-gain bonus for verification. We prove sublinear quality-learning regret when verification is frequent enough to learn but does not crowd out execution.
    \item \textbf{Empirical evaluation.}
    % We evaluate JOVE on multiple reasoning benchmarks in terms of answer quality, total API cost, and deadline compliance. Ablations examine resource constraints, planner models, and verification strategies, including sensitivity to verification weights.
    Across multiple reasoning datasets, JOVE achieves average accuracy comparable to the strongest baseline while achieving 3.7--16.3$\times$ lower cost and 8.3--17.0$\times$ lower latency. Ablation studies examine the effects of different verification weights and resource constraints.
\end{itemize}
% \begin{itemize}[wide]
% \item \textbf{Resource-constrained graph-based inference.}
% We formulate heterogeneous API allocation over a planner-generated task graph under a long-term API budget and a per-query latency chance constraint. Unlike scalarized routing objectives, our formulation enforces the two operational requirements separately.
% \item \textbf{Joint execution and verification.} We introduce a verification-aware online policy that jointly assigns executor APIs and selects intermediate outputs for verification. The resulting node-level feedback supports online quality estimation without requiring user-provided final-response labels.
% \item \textbf{Tractable constrained API optimization and online learning.} We reduce the time-average budget to an adaptive dual price and the latency chance constraint to deterministic critical-path inequalities, yielding a per-query MILP. We also characterize the exploration induced by verification and establish sublinear regret under the stated assumptions.
% \item \textbf{Empirical evaluation.}
% Experiments across multiple reasoning benchmarks demonstrate improved quality--cost--latency trade-offs over both inference-level and graph-aware routing baselines.
% \end{itemize}

\section{Related Work}
\label{sec:related}

\textbf{Graph-structured inference.}
Graph structure can improve reasoning or expose opportunities for parallel execution. Tree-of-Thoughts and Graph-of-Thoughts organize and search over intermediate reasoning states~\citep{yao2023tot,besta2024got}. Skeleton-of-Thought, PASTA, and Plato instead exploit independence or explicit dependencies to generate parts of a response concurrently~\citep{ning2024sot,jin2025learning,jin2025plato}. HuggingGPT, LLMCompiler, and S-DAG extend workflow planning to collections of different models, tools, and subagents~\citep{shen2023hugginggpt,kim2024llmcompiler,dong2026s}. Dyserve further includes verifiers to gate model selection using static offline profiles~\citep{qian2026servingagenticworkflowsphysicalplan}.
% These approaches study how to structure reasoning and execution. 
JOVE takes a planner-generated DAG as input and learns online to optimize executor assignments and which outputs to verify under resource constraints. 
% It can therefore complement methods for constructing task graphs.

\textbf{Resource-aware LLM routing and paid feedback.}
LLM routers and cascades improve quality-cost trade-offs by selecting models based on predicted quality, difficulty, or preference~\citep{chen2023frugalgpt,ding2024hybridllm,ong2024routellm,dekoninck2025unified,feng2024graphrouter}.
Recent online and resource-aware methods further consider feedback, budgets, or latency in their problems~\citep{wang2025mixllm,panda2025pilot,li2026wiserouter,patel2026latencyaware,zhang2025routerr1,atalar2025neural}, mostly assuming that quality feedback is freely available after execution.
AutoMix verifies an initial output before routing, incurring verification cost and delay on every query rather than verifying selectively~\citep{aggarwal2024automix}.
HybridFlow extends routing to task graphs, training its policy on offline data with subsequent online calibration, but restricts each routing decision to either an edge or a cloud model~\citep{dong2026hybridflow}.
JOVE instead selectively verifies node outputs and learns an executor-allocation policy online. This selective-feedback setting relates to costly-observation bandits, which separate action selection from paying to observe its reward~\citep{tucker2023costly,schur2026price}. JOVE extends this perspective to DAGs by jointly selecting executors and outputs to verify under a shared execution-verification budget and per-query latency constraints. 
% Costly-observation bandits explicitly distinguish taking an action from paying to observe its reward~\citep{tucker2023costly,schur2026price}, providing a foundation for balancing execution and learning.
% JOVE brings this perspective to DAG allocation, jointly selecting heterogeneous LLM executors and outputs to verify under a shared execution and verification budget and a separate latency constraint for each query.
\section{Method}
\label{sec:method}
In this section, we formulate the joint execution and verification problem (\secref{sec:method-problem}) and derive a per-query oracle MILP assuming known executor statistics (\secref{sec:method-milp}). We then develop an online policy replacing these statistics with estimates and adding an information-gain bonus for verifications (\secref{sec:method-estimation}). Finally, we establish a sublinear quality-learning regret guarantee (\secref{sec:method-theory}).

\subsection{Problem formulation}
\label{sec:method-problem}
Let $x_t$ denote the $t$-th query, where $t=1,2,\ldots$ indexes query arrivals.
A planner LLM decomposes $x_t$ into a DAG $\mathcal G_t=(\mathcal N_t,\mathcal E_t)$, where each node $i\in\mathcal N_t$ is a subtask. An edge $(j,i)\in\mathcal E_t$ indicates that node $i$ requires an output from node $j$ before execution. 
Each node specifies a subtask instruction, required upstream outputs, a success criterion, and a candidate executor set $\mathcal A_t^i$.
The planner decides $\mathcal A_t^i$ based on the subtask's needs and each model's capabilities, supported modalities, and context length; thus, candidate sets may differ across nodes. A designated output node produces the final response. All executor and verification decisions are made before graph execution.
Appendix~\ref{app:notation} summarizes the notation used throughout the paper.
% Let $\mathcal H_t$ denote the information available at round $t$: past observations, the current query and graph, and the resource constraints.

\textbf{Executor selection and verification.}
% Let $\mathcal A$ denote the set of APIs available for subtask execution. 
For node $i$ of query $x_t$, an online policy $\pi$ selects the joint action $(a_t^i,h_t^i)\in\mathcal A_t^i\times\{0,1\}$, where $a_t^i$ is the executor model and $h_t^i=1$ selects its output for verification\footnote{We leave the dependence of $a_t^i$ and $h_t^i$ on \(\pi\) implicit for notational simplicity.}. Executing node $i$ with model $a$ incurs stochastic API cost $M_t^i(a)$ and latency $D_t^i(a)$.
Its latent correctness $Y_t^i(a)\in\{0,1\}$ indicates whether the output satisfies the node's success criterion and is not observed from execution alone. 
If $h_t^i=1$, a fixed verifier LLM (or human judge), shared across nodes, assesses the subtask, output, and criterion at cost $\widetilde M_t^i$. It returns a binary label $\widehat Y_t^i\in\{0,1\}$ estimating $Y_t^i(a_t^i)$. Verification informs future allocations but does not alter the current response. We assume that verification proceeds asynchronously without delaying graph execution or response delivery. Its cost counts toward the API budget, whereas its latency is excluded from the current query's response time.
% All selected labels and realized costs are observed before allocating the next query.

\textbf{Graph-level cost and latency.} Let $\mathbf a_t=\{a_t^i\}_{i\in\mathcal N_t}$ and $\mathbf h_t=\{h_t^i\}_{i\in\mathcal N_t}$. The total API cost of processing query \(x_t\) is a random variable:
\begin{equation}
\label{eq:cost}
C_t(\mathbf{a}_t,\mathbf{h}_t)
=M_t^{\mathrm{plan}}+
\sum_{i\in\mathcal{N}_t}
\left(
M_t^i(a_t^i)+h_t^i\widetilde{M}_t^i
\right),
\end{equation}
where \(M_t^{\mathrm{plan}}\) denotes the random API cost of task graph generation. A local planner incurs no API charge, so \(M_t^{\mathrm{plan}}=0\).
Devices with limited computational resources may instead use a remote LLM API planner. Its realized cost \(m_t^{\mathrm{plan}}\) is observed before executor allocation and is therefore constant with respect to the allocation decisions.
Let $d_t^{\mathrm{plan}}$ denote the task graph generation time plus the time required to allocate executor models. Assuming that each node starts as soon as its required upstream outputs are available, with no additional scheduling delays, graph execution latency equals the critical-path duration: $L_t=\max_{P\in\mathcal P(\mathcal G_t)}\sum_{i\in P}D_t^i(a_t^i)$, where $\mathcal P(\mathcal G_t)$ is the set of source-to-sink paths. Provided that executor allocation is completed within its reserved time, the end-to-end inference latency is therefore at most
\(
d_t^{\mathrm{plan}}+L_t.
\)

\textbf{Objective and constraints.}
Let $\mathcal{H}_t$ denote the information available before allocating query $x_t$: past allocations with their realized costs, latencies, and verification labels, together with the current graph $\mathcal{G}_t$ and the realized planning cost and overhead.
We maximize the long-run average fraction of correctly executed nodes per query, subject to a long-run average API-cost budget $\gamma$ per query and a deadline $\mu_t$ with violation tolerance $\delta_t\in(0,1)$ for each query:
\begin{subequations}\label{eq:main-problem}
\begin{align}
\max_\pi\quad &\liminf_{T\to\infty}\frac{1}{T}\sum_{t=1}^T
 \mathbb E_\pi\!\left[\frac{1}{|\mathcal{N}_t|}\sum_{i\in\mathcal N_t}Y_t^i(a_t^i)\right]
 \label{eq:true-obj}\\
\mathrm{s.t.}\quad &\limsup_{T\to\infty}\frac1T\sum_{t=1}^T
 \mathbb E_\pi[C_t(\mathbf{a}_t,\mathbf{h}_t)]\leq\gamma,\label{eq:budget}\\
&\Pr_{\pi}\!\left(L_t\leq\mu_t-d_t^{\mathrm{plan}}\,\middle|\,\mathcal{H}_t,\mathbf{a}_t\right)\geq1-\delta_t,
 \quad\forall t,\ \forall\mathcal{H}_t,\ \forall \mathbf a_t \label{eq:latency-chance}\\
&a_t^i\in\mathcal A_t^i,\quad h_t^i\in\{0,1\},
 \quad\forall i\in\mathcal N_t,\ \forall t.\label{eq:decision-domain}
\end{align}
\end{subequations}
\textbf{Interpretation.}
The objective is an additive proxy for the quality of the final-answer: it encourages reliable intermediate results, and, under a sound decomposition of the original query, correct execution of all subtasks is sufficient for a correct final response. The budget permits higher spending on difficult queries to be offset by savings on others. Latency is constrained per query because a fast response cannot compensate for another missing its deadline; conditioning \eqref{eq:latency-chance} on the realized history and selected allocation enforces this at every round.
\begin{remark}[Role of $h_t^i$]
If expected model qualities were known, $h_t^i=0$ would be optimal since verification adds cost without changing the response.
When the qualities are unknown (\secref{sec:method-estimation}), setting $h_t^i=1$ provides a label $\widehat Y_t^i$ for learning model quality and informing future allocations.
\end{remark}

% Here, \(\gamma>0\) is the average API budget per query, while \(\mu_t\) and \(\delta_t\in(0,1)\) denote the deadline and maximum violation probability for query \(t\), respectively. The expectation and probability are induced by the policy and the stochastic API outcomes.
% The node-level objective is a stronger surrogate for final-response quality: every failed subtask is penalized, and, under sound decomposition and aggregation, correct execution of all subtasks is sufficient for a correct final response.
% The time-average budget allows higher spending on difficult queries to be offset by lower spending on easier ones. In contrast, latency is enforced per query because one fast response cannot compensate for another missing its deadline, and different queries may have different \((\mu_t,\delta_t)\) requirements.

% API quality, cost, and latency are stochastic and unknown before execution. To separate constraint handling from online learning, \secref{sec:method-milp} first develops the oracle per-query optimization assuming their conditional statistics are known. Section~\ref{sec:method-estimation} then estimates these quantities online, using all executions for cost and latency feedback and selective verification for node-quality feedback.

\subsection{Oracle MILP Formulation}
\label{sec:method-milp}

% \textbf{Oracle coefficients and joint encoding.}
% We first consider an oracle setting in which model quality, cost, and
% latency statistics are known. To encode executor and verification decisions
% jointly, define
% \(
% z_t^i(a,\eta)
% =
% \mathbbm{1}\!\left\{a_t^i=a,\;h_t^i=\eta\right\}, \text{ and }
% \sum_{a\in\mathcal A_t^i}\sum_{\eta\in\{0,1\}}z_t^i(a,\eta)=1,
% \forall i\in\mathcal N_t.
% \)
% Let
% \(\bar y_t^i(a)=\mathbb E[Y_t^i(a)]\), and define the
% expected execution and verification cost at node $i$ as 
% \begin{equation}
% \label{eq:joint-expected-cost}
% \bar c_t^i(a,\eta)
% =m_t^{\mathrm{plan}}+
% \mathbb E\!\left[
% M_t^i(a)+\eta\widetilde M_t^i
% \right].
% \end{equation}

\textbf{Oracle coefficients and joint encoding.}
We first consider an oracle setting in which model quality, cost,
and latency statistics are known.
To encode executor and verification decisions jointly, define
$z_t^i(a,\eta)
=\mathbbm{1}\!\left\{a_t^i=a,\;h_t^i=\eta\right\}$, where $\sum_{a\in\mathcal A_t^i}\sum_{\eta\in\{0,1\}}z_t^i(a,\eta)=1$,
$\forall i\in\mathcal N_t$.
Define the expected execution and verification cost
at node $i$ as
\(
\bar c_t^i(a,\eta)
=
\mathbb E\!\left[
M_t^i(a)+\eta\widetilde M_t^i
\right].
\)
The expected total cost is
\(
m_t^{\mathrm{plan}}
+\sum_{i\in\mathcal N_t}
\sum_{a\in\mathcal A_t^i}
\sum_{\eta\in\{0,1\}}
\bar c_t^i(a,\eta)z_t^i(a,\eta).
\)

\textbf{Long-term budget via dynamic pricing.}
We associate~\eqref{eq:budget} with a nonnegative multiplier $\lambda_t$, interpreted as the current shadow price of API spending. Starting from $\lambda_1=0$, we update the price after observing query $t$'s realized total cost $c_t$, including planning, execution, and verification:
\begin{equation}
\label{eq:dual-update}
\lambda_{t+1}
=
\left[\lambda_t+\alpha_\lambda(c_t-\gamma)\right]_+,
\qquad \lambda_1=0,
\end{equation}
where \(\alpha_\lambda>0\) is the step size and $[u]_+=\max\{u,0\}$. Spending above $\gamma$ increases the penalty on API costs, whereas spending below decreases it, subject to $\lambda_t \geq 0$. 
Let $\bar y_t^i(a)=\mathbb{E}[Y_t^i(a)]$. 
For query $t$, the price computed from preceding queries enters the objective as
\begin{equation}
\label{eq:priced-objective}
\max_{\mathbf z_t} 
\sum_{i\in\mathcal N_t}\sum_{a\in\mathcal A_t^i}
\sum_{\eta\in\{0,1\}}
\left[
\frac{1}{|\mathcal{N}_t|}\bar y_t^i(a)-\lambda_t\bar c_t^i(a,\eta)
\right]z_t^i(a,\eta).
\end{equation}
% Under the oracle and feasibility assumptions in Appendix~\ref{app:dual-proof}, maximizing~\eqref{eq:priced-objective} over latency-feasible assignments and updating $\lambda_t$ according to \eqref{eq:dual-update} ensures the long-term budget constraint.
Appendix~\ref{app:dual-proof} establishes a long-term budget guarantee for the oracle policy under the stated environment and feasibility assumptions.

\textbf{Per-query latency via node-level quantiles.}
To enforce \eqref{eq:latency-chance}, we split the violation tolerance $\delta_t$ across all execution nodes, choosing $\delta_t^i>0$ with $\sum_{i\in\mathcal N_t}\delta_t^i\leq\delta_t$.
For each node and candidate executor, define the smallest deadline met with probability at least $1-\delta_t^i$ as
\begin{equation}
\label{eq:latency-quantile}
\bar d_t^i(a)
=
\ell_t^i(a,\delta_t^i)
=
\inf\!\left\{
d\geq0:
\Pr\!\left(D_t^i(a)\leq d | \mathcal{H}_t \right)
\geq1-\delta_t^i
\right\}.
\end{equation}
% We use the uniform allocation
% \(\delta_t^i=\delta_t/|\mathcal N_t|\) by default. 
A union bound then gives the following sufficient condition for \eqref{eq:latency-chance}:
\begin{align}
\!\!\!\!\max_{P\in\mathcal P(\mathcal G_t)}
\sum_{i\in P}\sum_{a\in\mathcal A_t^i}\!\!\bar d_t^i(a)\!\!\!\!\!\sum_{\eta\in\{0,1\}}\!\!
\!z_t^i(a,\eta)
\!\leq\!\mu_t\!-\!d_t^{\mathrm{plan}}
\Rightarrow
\Pr\!\left(
L_t\leq\mu_t-d_t^{\mathrm{plan}}|\mathcal{H}_t,\mathbf{a}_t
\right)\!\!
\geq \!\!1\!-\!\delta_t.\label{eq:path-quantile}
\end{align}
Appendix~\ref{app:latency-proof} provides the proof. This reduction may be conservative, but requires no independence among node
latencies. We use $\delta_t^i=\delta_t/|\mathcal N_t|$ by default.
Appendix~\ref{app:risk-allocation} considers a less restrictive allocation that jointly selects executors and node risk $\delta_t^i$ from a discrete grid of risk levels. 
% Our ablation shows little performance difference from uniform risk allocation.

\textbf{Oracle MILP.}
\Eqref{eq:priced-objective} adds a cost penalty to the original objective (\eqref{eq:true-obj}), yielding a linear objective in the allocation variables. 
To formulate the oracle MILP, we express the deterministic latency condition in \eqref{eq:path-quantile} as a set of linear constraints, by introducing finish-time variables $F_t^i$.
Let $\mathcal S_t$ and $\mathcal T_t$ denote the sets of source and sink nodes, respectively.
Define $\mathbf{z}_t^i=\{z_t^i(a,\eta)\}_{a\in\mathcal{A}_t^i,\eta\in\{0,1\}}$ and let
$v_t^i(\mathbf{z}_t^i)=\sum_{a\in\mathcal A_t^i} \bar d_t^i(a)\sum_{\eta\in\{0,1\}} z_t^i(a,\eta)$ be the selected latency quantile for node $i$. The deterministic path-length condition (\eqref{eq:path-quantile}) holds if and only if 
the following inequalities are feasible:
% there exist finish-time variables satisfying
\begin{equation}\label{eq:finish-time-constraints}
\begin{aligned}
 F_t^i \geq v_t^i(z), i\in\mathcal S_t,
 \quad F_t^i \geq F_t^j+v_t^i(z),
 (j,i)\in\mathcal E_t,
 \quad F_t^i \leq\mu_t-d_t^{\mathrm{plan}},
 i\in\mathcal T_t.
\end{aligned}
\end{equation}
These inequalities account for task dependencies and require every path to finish by the deadline; together with the binary assignment constraints and~\eqref{eq:priced-objective}, they define a per-query MILP. The complete formulation is given in Appendix~\ref{app:oracle-milp}. For later use, let \(\mathbf d_t=\{\bar d_t^i(a)\}_{i,a}\) and let \(\mathcal F_t(\mathbf d_t)\) denote the resulting feasible set of assignment and finish-time variables. Since the oracle already knows model qualities, verification provides no learning benefits; thus, choosing $\eta=0$ at every node is optimal. We retain $\eta$ to accommodate verification when model qualities are unknown (\secref{sec:method-estimation}).

\subsection{Online estimation and verification-aware allocation}
\label{sec:method-estimation}
The oracle optimization requires expected model correctness \(\bar y_t^i(a)\), expected execution and verification cost \(\bar c_t^i(a,\eta)\), and latency quantiles \(\bar d_t^i(a)\), which are unknown in the online setting. Execution directly reveals cost and latency, whereas selective verification provides correctness labels for a subset of nodes. We use these observations to estimate the coefficients in an online fashion and augment the MILP objective (\eqref{eq:priced-objective}) with an information-gain bonus for acquiring verification feedback.

\textbf{Model quality estimation.}
% For each task-API pair, a lightweight embedding model (\texttt{Qwen3-Embedding-0.6B}) maps its task description, dependency
% context, and API description to a feature vector
% \(\phi_t^i(a)\in\mathbb R^d\). 
For each task-model pair, we embed the task instruction, descriptions of required node outputs, and candidate model description into a feature vector $\phi_t^i(a)\in\mathbb R^d$ using the embedding model specified in Appendix~\ref{app:embedding}.
Let $(\phi,\hat y)\in \mathcal D_{t-1}$ denote the feature--label pairs obtained through verification before query $t$. Ridge regression with regularization $\lambda_{\mathrm{reg}}>0$ yields
\begin{equation}\label{eq:ridge}
 G_{t-1}=\lambda_{\mathrm{reg}}I+
 \sum_{(\phi,\hat y)\in\mathcal D_{t-1}}\phi\phi^\top,
 \qquad
 \hat\theta_t=G_{t-1}^{-1}
 \sum_{(\phi,\hat y)\in\mathcal D_{t-1}}\phi\hat y.
\end{equation}
Using a linear upper confidence bound (LinUCB) construction  ~\citep{abbasi2011improved}, we define the uncertainty score and model quality estimate as
\begin{equation}\label{eq:ucb}
 u_t^i(a)=\sqrt{\phi_t^i(a)^\top G_{t-1}^{-1}\phi_t^i(a)},
 \quad
 \mathrm{UCB}_t^i(a)=\mathrm{clip}
 (\phi_t^i(a)^\top\hat\theta_t+\beta_tu_t^i(a),0,1),
\end{equation}
where $\beta_t$ scales the uncertainty bonus and
$\mathrm{clip}(\cdot,0,1)$ restricts the estimate to $[0,1]$. 
Past verified observations reduce uncertainty along the feature directions they cover, supporting more accurate quality predictions. $\beta_tu_t^i(a)$ favors underexplored task--model pairs.
Appendix~\ref{app:regret-proof} specifies $\beta_t$ and establishes the high-probability confidence bound used in the regret analysis.

\textbf{Resource estimation.}
Unlike correctness, cost and latency are observed directly from each executor call, so their estimates can be updated without additional feedback cost. The UCB score encourages selecting underexplored task-model pairs, helping resource estimates for each executor $a$ cover a broader range of input contexts.
We reuse the task-model features $\phi$ to estimate expected costs and latency quantiles, weighting past observations by their similarity to the current feature vector $\phi_t^i(a)$.
This yields the estimated execution-plus-verification cost $\hat c_t^i(a,\eta)$ and node-level latency quantile $\hat d_t^i(a)$ used in the online MILP.
Appendix~\ref{app:estimators} provides the estimation details.

\textbf{Verification-aware allocation.}
We replace the unknown coefficients in the oracle MILP (\secref{sec:method-milp}) with the above estimates. Execution reveals cost and latency, but model quality feedback requires paid verification. Since verification does not alter the current response, we reward its information gain to support future executor selection. Specifically, verifying a task-model pair $(i,a)$ updates the information matrix to \(G_{t-1}^{+}=G_{t-1}+\phi_t^i(a)(\phi_t^i(a))^\top\). We quantify the information gain from this observation using the D-optimal criterion \citep{de1995d},
\begin{equation}
\label{eq:dopt}
I_t^i(a)
=
\frac{1}{2}
\log
\frac{\det(G_{t-1}^{+})}{\det(G_{t-1})}
=
\frac{1}{2}\log\!\left(1+u_t^i(a)^2\right),
\end{equation}
\textbf{Interpretation.}
The eigenvalues of $G_{t-1}$ measure accumulated information, including regularization, along its eigenvector directions. 
Since the determinant is the product of these eigenvalues, the gain $I_t^i(a)$ is large when the candidate feature vector has a substantial component in a direction with limited prior observations, and small when those representations are well represented. For a fixed confidence radius $\beta_t$, this gain equals the reduction in the log-volume of the parameter confidence ellipsoid after adding the observation. For the online MILP, JOVE replaces the oracle coefficients with online estimates and adds a verification bonus:
\begin{equation}
\label{eq:online-milp}
\max_{(\mathbf z,\mathbf F)\in\mathcal F_t(\hat{\mathbf d}_t)}
\quad
\sum_{i\in\mathcal N_t}
\sum_{a\in\mathcal A_t^i}
\sum_{\eta\in\{0,1\}}
\left[
\frac{1}{|\mathcal{N}_t|}(\mathrm{UCB}_t^i(a)
+
k_v\eta I_t^i(a))
-
\lambda_t\hat c_t^i(a,\eta)
\right]
z_t^i(a,\eta).
\end{equation}
The three terms represent estimated model quality, verification gain, and priced API cost, respectively. 
The feasible set $\mathcal F_t(\hat{\mathbf d}_t)$, defined in \secref{sec:method-milp}, enforces assignment constraints and the latency condition using estimated node durations. The budget price $\lambda_t$ adapts to observed spending, while $k_v\geq0$ controls the weights assigned to verification gain. 
% As verified labels accumulate along a feature direction, its information gain decreases, making further verification less attractive even at fixed $k_v$.
After each query, we update the quality model using verified labels (\eqref{eq:ridge}), the resource estimates using observed calls (\eqref{eq:empirical-resource-estimates} and \ref{eq:weighted-latency-quantile}), and $\lambda_t$ using the realized total cost (\eqref{eq:dual-update}).
Appendix~\ref{app:algorithm} presents the complete JOVE loop.

\begin{remark}[Verification as a threshold decision]\label{remark:verification-rule}
For a fixed executor, verification leaves latency feasibility and quality term unchanged. It is therefore optimal to set $h_t^i=\mathbf1\{{k_v I_t^i(a_t^i)}/{|\mathcal{N}_t|}\geq\lambda_t\hat{\widetilde m}_t^i\}$.
Optimizing over $h_t^i$ is equivalent to adding $[k_v I_t^i(a)/|\mathcal{N}_t|-\lambda_t\hat{\widetilde m}_t^i]_+$
to each candidate executor's score, so the value of verification also influences executor selection. JOVE can prefer an executor offering valuable future feedback over one with a higher immediate quality score.
\end{remark}

\subsection{Regret Analysis}
\label{sec:method-theory}
In this section, we establish sublinear quality-learning regret for a range of verification weights $k_v$ under the following assumptions.

% Sep 21: clarify graph and latency stationary or iid assumptions
% how latency is set (by the user, or by the planner)
% mu_t, delta_t in assumptions to be positive
% reference model: aggressive way, set the cost and latency using the reference model on a simple dataset, notice that Qwen3-32B is reasoning model that can enable reasoning automatrically.

\begin{assumption}[Quality and feedback]
\label{ass:quality}
There exists a fixed parameter $\theta^\star$ such that, for every available task-model pair, $\bar y_t^i(a)=\phi_t^i(a)^\top\theta^\star\in[0,1]$,
where $\|\phi_t^i(a)\|_2\leq L$ and $\|\theta^\star\|_2\leq S$.
Verified labels satisfy $\widehat Y_t^i=\bar y_t^i(a_t^i)+\varepsilon_t^i$. Conditioned on the selected features and all preceding observations, $\varepsilon_t^i$ has zero-mean and is $R$-sub-Gaussian.
Selected verification labels and realized query costs are available before the next query is allocated. 
\end{assumption}
\begin{assumption}[Allocation regularity]
\label{ass:regret}
For all $t\leq T$,
$1\leq|\mathcal N_t|\leq n_{\max}$,
$0\leq\lambda_t\leq\lambda_{\max}$,
$0\leq\hat{\widetilde m}_t^i\leq\widetilde m_{\max}$, and $\lambda_{\mathrm{reg}}\geq L^2$.
For every $t\leq T$, the estimated feasible set $\mathcal F_t(\hat{\mathbf d}_t)$ is nonempty, and the online MILP (\eqref{eq:online-milp}) is solved to optimality.
\end{assumption}

Assumption~\ref{ass:quality} formalizes a stationary quality model and unbiased feedback. Assumption~\ref{ass:regret} bounds priced verification costs and requires optimal feasible allocations. 
Under these conditions, we compare JOVE with the following benchmark policy.

\textbf{Benchmark policy.}
To isolate quality learning, we compare JOVE with a policy that knows the true expected qualities $\bar y_t^i(a)$ but uses the same resource estimates and budget prices as JOVE.
We define
\begin{equation}\label{eq:quality-regret-objective}
 J_t(\mathbf{z}_t)=\sum_{i\in\mathcal N_t}\sum_{a\in\mathcal A_t^i}
\sum_{\eta\in\{0,1\}}
 \bigl[\frac{1}{|\mathcal{N}_t|}\bar y_t^i(a)-\lambda_t\hat c_t^i(a,\eta)\bigr]
 z_t^i(a,\eta),
 \quad
(\mathbf z_t^\star,\mathbf F_t^\star)
\in \!\!
\operatorname*{arg\,max}_{(\mathbf z,\mathbf F)
\in\mathcal F_t(\hat{\mathbf d}_t)} \!\!\!
J_t(\mathbf z).
\end{equation}
The quality-learning regret is
$\mathcal R_T=\sum_{t=1}^T[J_t(\mathbf{z}_t^\star)-J_t(\mathbf{z}_t)]$, where $\mathbf z_t$ is JOVE's selected allocation.
% To isolate the effect of selective verification on quality learning, we use the same resource estimates and budget prices for JOVE and the comparator policies, accounting for execution and verification costs while keeping resource-estimation error outside the scope of our analysis.

\begin{theorem}[A sufficient verification-weight interval]
\label{thm:sublinear-regret}
Under Assumptions~\ref{ass:quality}-\ref{ass:regret}, fix $\delta_{\mathrm{conf}}\in(0,1)$ and a horizon $T\geq 8d\log (8n_{\mathrm{max}})$, where $d$ is the dimension of the feature vector $\phi_t^i(a)\in\mathbb{R}^d$. Let $\beta_t$ be specified in Appendix~\ref{app:regret-proof}, and define $c_v=\lambda_{\max}\widetilde m_{\max}>0$,
\(
 C_{\mathrm{bat}}
 =\left(1+\frac{(n_{\max}-1)L^2}{\lambda_{\mathrm{reg}}}\right)
  \left(1+\frac{L^2}{\lambda_{\mathrm{reg}}}\right)
  \), and
\( 
  \Gamma_T=C_{\mathrm{bat}}d
 \log\!\left(1+\frac{n_{\mathrm {max}}TL^2}{d\lambda_{\mathrm{reg}}}\right).
\)
For any fixed $k_v\in[K_T,2K_T]$, where
$ K_T=\frac{2n_{\max}c_v}
{\log \! \left(1+\left(\Gamma_T/(n_{\max}T)\right)^{2/3}\right)}$, with probability at least $1-\delta_{\mathrm{conf}}$,
\begin{equation}\label{eq:regret-bound}
 \mathcal R_T
 \leq 4(n_{\max}c_v+\max_{t\leq T}\beta_t)(n_{\mathrm{max}}T)^{2/3}\Gamma_T^{1/3}.
\end{equation}
For fixed $d,\delta_{\mathrm{conf}}$, $n_{\max}$, and the
remaining assumption constants, this bound is
$O(T^{2/3}[\log(1+T)]^{5/6})$,
which is sublinear in $T$ (see Appendix~\ref{app:regret-proof} for proof).
\end{theorem}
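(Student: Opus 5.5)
The plan is an optimism argument. We compare JOVE's surrogate objective with $J_t$ on a high-probability confidence event. We then split the per-round regret into two kinds of nodes: nodes JOVE leaves unverified, whose uncertainty the threshold rule of Remark~\ref{remark:verification-rule} forces to be small, and nodes JOVE verifies, whose total information gain a batched elliptical-potential bound controls. The interval $k_v\in[K_T,2K_T]$ is what balances these two contributions at the $T^{2/3}$ rate.

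\textbf{Step 1 (confidence event and optimism).} Under Assumption~\ref{ass:quality}, I would invoke the self-normalized bound of \citet{abbasi2011improved}. Labels only enter $G_{t-1}$ at query boundaries, but this is still a filtration-adapted sequence. With $\beta_t$ as in Appendix~\ref{app:regret-proof}, it gives $\|\hat\theta_t-\theta^\star\|_{G_{t-1}}\le\beta_t$ for all $t$ with probability at least $1-\delta_{\mathrm{conf}}$. Since $\bar y_t^i(a)\in[0,1]$, clipping preserves optimism, so on this event
\[
\bar y_t^i(a)\le \mathrm{UCB}_t^i(a)\le \bar y_t^i(a)+2\beta_t u_t^i(a).
\]
Let $\hat J_t$ denote the objective in \eqref{eq:online-milp}. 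JOVE and the benchmark optimize over the same feasible set $\mathcal F_t(\hat{\mathbf d}_t)$, and the bonus term is nonnegative. Hence
\[
J_t(\mathbf z_t^\star)\le \hat J_t(\mathbf z_t^\star)\le \hat J_t(\mathbf z_t).
\]
Therefore
\[
J_t(\mathbf z_t^\star)-J_t(\mathbf z_t)\le \frac{1}{|\mathcal N_t|}\sum_{i\in\mathcal N_t}\Bigl(2\beta_t u_t^i(a_t^i)+k_v h_t^i I_t^i(a_t^i)\Bigr).
\]
The verification cost is charged in both $J_t$ and $\hat J_t$, so it cancels.

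\textbf{Step 2 (unverified nodes).} By Remark~\ref{remark:verification-rule}, $h_t^i=0$ implies $k_v I_t^i/|\mathcal N_t|<\lambda_t\hat{\widetilde m}_t^i\le c_v$. This gives $\log(1+u^2)<2n_{\max}c_v/k_v$. Set $\epsilon_T^2=(\Gamma_T/(n_{\max}T))^{2/3}$. The choice $k_v\ge K_T$ is exactly what makes $2n_{\max}c_v/k_v\le\log(1+\epsilon_T^2)$, so $u_t^i<\epsilon_T$. Summing over at most $n_{\max}T$ nodes, and using $1/|\mathcal N_t|\le1$ loosely, the unverified contribution is at most $2\max_t\beta_t\,(n_{\max}T)^{2/3}\Gamma_T^{1/3}$.

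\textbf{Step 3 (verified nodes).} Two terms remain.
\begin{itemize}
\item \emph{Information-gain term.} I would bound $\sum_{t,i}h_t^iI_t^i(a_t^i)\le\Gamma_T$ using a batched elliptical-potential lemma. Within query $t$, up to $n_{\max}$ features are added to $G$ simultaneously, but each gain is measured against $G_{t-1}$. Since $\lambda_{\mathrm{reg}}\ge L^2$, every $u^2\le1$. The factor $(1+(n_{\max}-1)L^2/\lambda_{\mathrm{reg}})$ then bounds the ratio between $\phi^\top G_{t-1}^{-1}\phi$ and the sequential-within-batch quantity $\phi^\top(G_{t-1}+\text{earlier batch terms})^{-1}\phi$. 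The factor $(1+L^2/\lambda_{\mathrm{reg}})$ converts $u^2$ to $\log(1+u^2)$. Finally, the standard log-determinant telescoping gives $d\log(1+n_{\max}TL^2/(d\lambda_{\mathrm{reg}}))$.
\item \emph{Resulting bonus bound.} Using $k_v\le2K_T$ and $\log(1+x)\ge x/2$ for small $x$, we get $k_v\Gamma_T\lesssim 4n_{\max}c_v\epsilon_T^{-2}\Gamma_T=4n_{\max}c_v(n_{\max}T)^{2/3}\Gamma_T^{1/3}$. The condition $T\ge 8d\log(8n_{\max})$ is intended to guarantee $\epsilon_T^2\le1$, so this approximation is valid.
\item \emph{Uncertainty term.} For verified nodes, $\sum 2\beta_t u$ is at most $2\max_t\beta_t\sqrt{n_{\max}T\cdot2\Gamma_T}$ by Cauchy--Schwarz and the same potential bound. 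This is dominated by $(n_{\max}T)^{2/3}\Gamma_T^{1/3}$ once $n_{\max}T\ge\Gamma_T$, which again follows from the horizon condition.
\end{itemize}

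Collecting the three pieces gives \eqref{eq:regret-bound} with constant 4. The $O(T^{2/3}\log^{5/6})$ rate follows from $\Gamma_T=O(\log T)$ and $\beta_t=O(\sqrt{\log T})$.

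I expect the main obstacle to be the batched potential lemma in Step 3, which also has to produce exactly the constant $C_{\mathrm{bat}}$. I would first try the matrix inequality $G_{t-1}+\sum_{j\in\text{batch}}\phi_j\phi_j^\top\preceq(1+(n_{\max}-1)L^2/\lambda_{\mathrm{reg}})G_{t-1}$, restricted to the other batch members. If that is too lossy, the fallback is a determinant-ratio argument over the whole batch.
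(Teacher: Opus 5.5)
Your proposal is correct and is essentially the paper's proof. It uses the same optimism chain over the shared feasible set: your $\hat J_t$ with the bonus included gives $J_t(\mathbf z_t^\star)\le\hat J_t(\mathbf z_t^\star)\le\hat J_t(\mathbf z_t)$, which is equivalent to the paper's rearrangement. It also shares the forced-verification consequence of exact optimality, with your $\epsilon_T=\tau_T=(\Gamma_T/N)^{1/3}$, and the batched potential lemma. Your first-choice inequality $V_{t,j-1}\preceq(1+(j-1)L^2/\lambda_{\mathrm{reg}})G_{t-1}$ is exactly the paper's Step 1, so no determinant-ratio fallback is needed. The balancing through $k_v\in[K_T,2K_T]$ is the same as well. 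The one real variation is how you bound the verified-node uncertainty sum. You use Cauchy--Schwarz, $\sum_{\text{verified}}u\le\sqrt{N\Gamma_T}=N\tau_T^{3/2}$. The paper instead splits nodes by $u>\tau_T$ versus $u\le\tau_T$ and uses $u\le u^2/\tau_T$ on the first group, getting $\Gamma_T/\tau_T=N\tau_T^2$. Since $\tau_T\le1$, both pieces are at most $N\tau_T$, so either route gives $\sum_{t,i}u_t^i(a_t^i)\le 2N\tau_T$. The paper's piece is slightly tighter; yours partitions by the decision variable rather than the uncertainty level.

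Two bookkeeping repairs are needed to reach the stated constant $4$.
\begin{enumerate}
\item The chain you describe bounds $\sum h_t^i[u_t^i(a_t^i)]^2\le\Gamma_T$ directly, and then $I\le u^2/2$ gives $\sum h_t^iI_t^i(a_t^i)\le\Gamma_T/2$. With your looser $\sqrt{2\Gamma_T}$, the $\beta$-coefficient becomes $2+2\sqrt2$ rather than $4$. Your evaluation $2K_T\lesssim 4n_{\max}c_v\epsilon_T^{-2}$ is also off by a factor of two: from $\log(1+x)\ge x/2$ one gets $2K_T\le 8n_{\max}c_v\tau_T^{-2}$. That factor is exactly compensated by using $\Gamma_T/2$ instead of $\Gamma_T$.
\item The inequality $\Gamma_T\le N$ (equivalently $\tau_T\le 1$) is not immediate from $T\ge 8d\log(8n_{\max})$. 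You need it both for $\log(1+\tau_T^2)\ge\tau_T^2/2$ and for $N\tau_T^{3/2}\le N\tau_T$. The paper proves it in three steps:
\begin{itemize}
\item it shows $C_{\mathrm{bat}}\le 2n_{\max}$ using $\lambda_{\mathrm{reg}}\ge L^2$;
\item it proves $C\log(1+x)\le x$ for $x=N/d\ge 4C\log(4C)$, via the tangent line of $\log$ at $2C$;
\item it uses monotonicity of $c\log(4c)$ to reduce this to the horizon condition.
\end{itemize}
\end{enumerate}

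Finally, without a tie-breaking rule, exact optimality gives only the non-strict $k_vI_t^i(a_t^i)/n_t\le\lambda_t\hat{\widetilde m}_t^i$ when $h_t^i=0$. This still yields $u_t^i(a_t^i)\le\tau_T$, so nothing breaks.
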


\textbf{Interpretation.}
The theorem identifies an interval of verification weights yielding sublinear quality-learning regret under the stated assumptions. 
The lower bound $K_T$ ensures verification of sufficiently uncertain task-model pairs, while the upper bound $2K_T$ keeps the cumulative bonus contribution sublinear in $T$. 
This horizon-dependent interval is sufficient for sublinear regret, but does not identify an optimal $k_v$.
The condition $T\geq8d\log(8n_{\max})$ is conservative.
Reducing $d$ lowers this threshold and may enable faster learning, but also restricts the quality model's expressiveness, potentially reducing prediction accuracy.
Empirically, JOVE performs well across a broad range of verification weights, suggesting limited sensitivity to $k_v$ within the tested range.

% The horizon condition $T\geq8d\log(8n_{\max})$ is conservative; a smaller feature dimension lowers this threshold. 
% Empirically, JOVE performs well across a broad range of verification weights, suggesting limited sensitivity to $k_v$ over the tested range.

\section{Experiments}
\label{sec:experiments}

\newcommand{\accstd}[2]{\ensuremath{#1 \pm #2}}

% Latency/cost: smaller standard deviation
\newcommand{\meanstd}[2]{%
  \ensuremath{#1{\scriptstyle\,\pm\,#2}}%
}
\definecolor{lightgray}{rgb}{0.93, 0.93, 0.93}
\begin{table*}[t]
\centering
\caption{
Accuracy, latency, and API cost per query across four benchmarks. Averaged across benchmarks, JOVE achieves competitive accuracy with substantially lower latency and API cost than the unconstrained inference baselines. Values are mean $\pm$ standard deviation over three random seeds.
}
\label{tab:main-results}
\small
\setlength{\tabcolsep}{2pt}
\renewcommand{\arraystretch}{1.12}
\resizebox{0.99\linewidth}{!}{%
\begin{tabular}{@{}l*{8}{c}@{}}
\toprule

% ==================== Accuracy panel ====================
\multicolumn{9}{c}{\textbf{(a) Accuracy (\%) $\uparrow$}} \\
\midrule
Methods
& \multicolumn{2}{c}{Bamboogle}
& \multicolumn{2}{c}{MMLU-Pro}
& \multicolumn{2}{c}{LiveBench-Reasoning}
& \multicolumn{2}{c}{GPQA} \\
\midrule

\multicolumn{9}{@{}l}{
  \textit{Inference-level baselines (unconstrained)}
} \\[2pt]

\rowcolor{lightgray}Direct (Qwen3-32B)
& \multicolumn{2}{c}{\accstd{47.5}{3.2}}
& \multicolumn{2}{c}{\accstd{47.3}{0.8}}
& \multicolumn{2}{c}{\accstd{31.4}{1.2}}
& \multicolumn{2}{c}{\accstd{45.1}{3.8}} \\

CoT~\citep{wei2022cot}
& \multicolumn{2}{c}{\accstd{\textbf{51.7}}{1.2}}
& \multicolumn{2}{c}{\accstd{54.1}{1.1}}
& \multicolumn{2}{c}{\accstd{39.2}{0.9}}
& \multicolumn{2}{c}{\accstd{54.2}{3.6}} \\

SoT~\citep{ning2024sot}
& \multicolumn{2}{c}{\accstd{48.9}{3.4}}
& \multicolumn{2}{c}{\accstd{52.0}{3.1}}
& \multicolumn{2}{c}{\accstd{33.4}{2.3}}
& \multicolumn{2}{c}{\accstd{50.9}{6.8}} \\

Plato~\citep{jin2025plato}
& \multicolumn{2}{c}{\accstd{46.5}{0.9}}
& \multicolumn{2}{c}{\accstd{52.8}{3.2}}
& \multicolumn{2}{c}{\accstd{38.4}{1.8}}
& \multicolumn{2}{c}{\accstd{\textbf{60.3}}{0.1}} \\

\midrule
\multicolumn{9}{@{}l}{
  \textit{Resource-aware methods (constrained)}
} \\[2pt]

WR-Online~\citep{li2026wiserouter}
& \multicolumn{2}{c}{\accstd{43.3}{3.5}}
& \multicolumn{2}{c}{\accstd{45.7}{3.8}}
& \multicolumn{2}{c}{\accstd{38.0}{4.6}}
& \multicolumn{2}{c}{\accstd{48.1}{1.7}} \\

Knapsack-ascend
& \multicolumn{2}{c}{\accstd{48.0}{2.1}}
& \multicolumn{2}{c}{\accstd{41.0}{6.0}}
& \multicolumn{2}{c}{\accstd{32.2}{5.6}}
& \multicolumn{2}{c}{\accstd{39.0}{1.3}} \\

Knapsack-descend
& \multicolumn{2}{c}{\accstd{44.3}{7.0}}
& \multicolumn{2}{c}{\accstd{42.6}{2.9}}
& \multicolumn{2}{c}{\accstd{33.2}{2.0}}
& \multicolumn{2}{c}{\accstd{41.2}{1.2}} \\

\rowcolor{lightgray}JOVE single-node
& \multicolumn{2}{c}{\accstd{28.8}{8.4}}
& \multicolumn{2}{c}{\accstd{46.9}{11.4}}
& \multicolumn{2}{c}{\accstd{28.1}{4.2}}
& \multicolumn{2}{c}{\accstd{43.5}{9.4}} \\

\rowcolor{highlight}\textbf{JOVE}
& \multicolumn{2}{c}{\accstd{49.3}{1.0}}
& \multicolumn{2}{c}{\accstd{\textbf{54.8}}{1.1}}
& \multicolumn{2}{c}{\accstd{\textbf{41.3}}{1.2}}
& \multicolumn{2}{c}{\accstd{52.7}{2.8}} \\

% ================= Latency and cost panel =================
\midrule[\heavyrulewidth]
\multicolumn{9}{c}{
  \textbf{(b) Latency (sec.) and cost ($\mathbf{\times 10^{-4}}$USD) $\downarrow$}
} \\
\midrule[\heavyrulewidth]
% Method
% & \multicolumn{2}{c}{Bamboogle}
% & \multicolumn{2}{c}{MMLU-Pro}
% & \multicolumn{2}{c}{LiveBench-Reasoning}
% & \multicolumn{2}{c}{GPQA} \\
% \cmidrule(lr){2-3}
% \cmidrule(lr){4-5}
% \cmidrule(lr){6-7}
% \cmidrule(l){8-9}
Methods
& Latency & Cost
& Latency & Cost
& Latency & Cost
& Latency & Cost \\
\midrule

\multicolumn{9}{@{}l}{
  \textit{Inference-level baselines (unconstrained)}
} \\[2pt]

\rowcolor{lightgray}Direct (Qwen3-32B)
& \meanstd{\underline{14.09}}{0.99} & \meanstd{\underline{2.00}}{0.15}
& \meanstd{45.39}{2.29} & \meanstd{5.34}{0.38} & \meanstd{41.18}{0.20} & \meanstd{10.54}{0.01} & \meanstd{59.46}{10.64} & \meanstd{10.56}{2.28} \\

CoT~\citep{wei2022cot}
& \meanstd{18.61}{2.27} & \meanstd{2.24}{0.41}
& \meanstd{34.11}{4.63} & \meanstd{5.38}{0.34} & \meanstd{80.38}{17.48} & \meanstd{12.28}{1.31} & \meanstd{101.11}{14.00} & \meanstd{12.81}{1.96} \\

SoT~\citep{ning2024sot}
& \meanstd{17.04}{9.71} & \meanstd{10.56}{0.25}
& \meanstd{70.17}{6.27} & \meanstd{27.53}{3.71} & \meanstd{129.28}{23.46} & \meanstd{52.62}{0.02} & \meanstd{137.56}{37.40} & \meanstd{55.46}{4.72} \\

Plato~\citep{jin2025plato}
& \meanstd{43.95}{4.62} & \meanstd{8.89}{0.40}
& \meanstd{104.20}{5.21} & \meanstd{22.23}{2.02} 
& \meanstd{189.51}{41.48} & \meanstd{40.17}{1.22} 
& \meanstd{145.7}{36.94} & \meanstd{37.35}{6.57} \\

\midrule
\multicolumn{9}{@{}l}{
  \textit{Resource-aware methods (constrained)}
} \\[2pt]

WR-Online~\citep{li2026wiserouter}
& \meanstd{16.02}{2.97} & \meanstd{1.96}{0.10}
& \meanstd{21.17}{1.29} & \meanstd{2.07}{0.02}
& \meanstd{23.08}{6.74} & \meanstd{2.04}{0.13}
& \meanstd{17.25}{0.28} & \meanstd{2.09}{0.09} \\

Knapsack-ascend
& \meanstd{4.66}{0.20} & \meanstd{1.82}{0.03}
& \meanstd{4.70}{0.79} & \meanstd{1.63}{0.25}
& \meanstd{16.68}{6.85} & \meanstd{2.87}{1.18}
& \meanstd{5.65}{0.63} & \meanstd{2.34}{0.40} \\

Knapsack-descend
& \meanstd{6.17}{0.69} & \meanstd{1.97}{0.44}
& \meanstd{5.41}{1.17} & \meanstd{1.72}{0.11}
& \meanstd{16.56}{6.84} & \meanstd{2.37}{1.30}
& \meanstd{6.84}{1.36} & \meanstd{1.88}{0.25} \\

\rowcolor{lightgray}JOVE single-node
& \meanstd{2.53}{1.10} & \meanstd{0.66}{0.12}
& \meanstd{2.99}{0.43} & \meanstd{1.60}{0.14}
& \meanstd{1.15}{0.21} & \meanstd{2.30}{0.23}
& \meanstd{1.66}{0.60} & \meanstd{2.15}{0.10} \\

\rowcolor{highlight}\textbf{JOVE}
& \meanstd{3.68}{0.48} & \meanstd{2.18}{0.18}
& \meanstd{6.66}{2.09} & \meanstd{2.03}{0.16}
& \meanstd{8.71}{1.15} & \meanstd{2.26}{0.43}
& \meanstd{9.32}{0.70} & \meanstd{2.49}{0.14} \\

\bottomrule
\end{tabular}}
\vspace{-0.2cm}
\end{table*}

\textbf{Datasets and metrics.}
We evaluate JOVE on four reasoning benchmarks: Bamboogle~\citep{press2023measuring} (multi-hop QA), MMLU-Pro~\citep{wang2024mmlu} (multidisciplinary reasoning), GPQA~\citep{rein2023gpqa} (science), and LiveBench-Reasoning~\citep{white2025livebench} (structured reasoning). We repeat each experiment over three random orderings of the online query sequence, and measure final answer accuracy, end-to-end response latency, and total API cost per query.
% We report the mean and standard deviation of each metric across the three runs.

\textbf{Experimental settings.}
We use Gemini-2.5-Flash-Lite as the planner and Qwen3-235B-A22B-2507 as the verifier. 
The executor pool includes eight models of different sizes (detailed in Appendix~\ref{app:extra-experiments}), accessed through the \cite{openrouter} platform. 
Unless otherwise specified, we set the verification weight to $k_v=5.0$, exploration coefficient to $\beta_t=0.25$, regularization parameter to $\lambda_{\mathrm{reg}}=1$, budget-update step size to $\alpha_\lambda=100$, and latency violation tolerance to $\delta_t=0.1$.
We use Qwen3-32B as the reference model for accuracy, cost, and latency comparisons.
Based on its average cost and latency on Bamboogle, we set the API-cost budget to $\gamma=2.0\times10^{-4}$ USD and the deadline to $\mu_t=16$s.
We choose Bamboogle because it yields the lowest cost and latency across the four datasets, resulting in stricter resource targets than calibration across all datasets.
% Using the average cost and latency of its calls on Bamboogle as a reference, we set the API-cost budget to $\gamma=2.0\times 10^{-4}$ USD and the end-to-end latency deadline to $\mu_t=16$ seconds. 

\textbf{Baselines.}
We compare against two groups of methods.
(1) \emph{Inference-level baselines}: Direct invocation, Chain-of-Thought (CoT)~\citep{wei2022cot}, Skeleton-of-Thought (SoT)~\citep{ning2024sot}, and Plato~\citep{jin2025plato}. We evaluate these methods using the reference model without budget and latency constraints.
%These baselines span direct generation, sequential reasoning, and structured parallel inference, which help assess whether our constrained allocation JOVE remains competitive with well-established inference methods.
(2) \emph{Resource-aware baselines}: 
% WR-Online~\citep{li2026wiserouter}, Knapsack-ascend, Knapsack-descend, and JOVE single-node. 
\textit{WR-Online}~\citep{li2026wiserouter} routes queries under the cost budget (without latency constraints) by learning cost and reward online as a constrained contextual bandit.
\textit{Knapsack-ascend} selects an allocation that fully utilizes the cost budget $\gamma$ and latency allowance, while ensuring nondecreasing expected execution costs along each source-to-sink path.
\textit{Knapsack-descend} follows the same procedure but instead requires nonincreasing costs. As an ablation, we evaluate \textit{JOVE single-node}, which retains JOVE's allocation and verification mechanisms but answers
each query with one executor call, without decomposition.
For resource-aware methods, we generate four distinct task graphs per query with 2--5 nodes and shuffle all query--graph pairs into an online learning sequence.
We further discuss the learning sequence in Appendix~\ref{app:sequence}.
We did not include resource-aware methods that require offline data~\citep{dong2026hybridflow,qian2026servingagenticworkflowsphysicalplan}, since our setting provides no such data and the policy is learned online.

\subsection{Experimental results}
\textbf{JOVE performance.}
Table~\ref{tab:main-results} reports accuracy, latency, and cost under the budget and latency deadline calibrated from direct Qwen3-32B calls on Bamboogle. 
Averaged over the four benchmarks, JOVE achieves comparable accuracy (49.5\%) at 2.24$\times10^{-4}$~USD and 7.09s per query. Compared with the three inference baselines (CoT, SoT, and Plato), JOVE uses 3.7-16.3$\times$ less cost and 8.3-17.0$\times$ less latency while achieving comparable average accuracy. 
\begin{wrapfigure}{r}{0.40\textwidth}
    \centering
\vspace{-0.2cm}
    \includegraphics[width=0.42\textwidth]{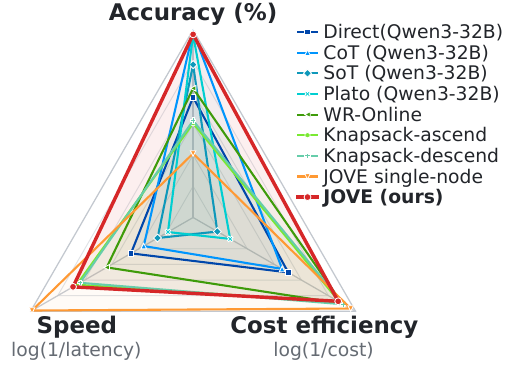}
    \caption{Accuracy, latency, and cost trade-offs, averaged over four datasets.}
    \label{fig:radar}
\vspace{-0.5cm}
\end{wrapfigure}
WR-Online achieves 43.8\% average accuracy but does not enforce a latency constraint; its average latency is 2.7$\times$ that of JOVE.
The constrained heuristics isolate the contribution of the allocation itself: Knapsack-ascend and -descend consume comparable resources yet reach only 40.1\% and 40.3\% on average, 9.5 and 9.2 points below JOVE. Averaged across benchmarks, JOVE single-node is cheapest and fastest (1.68$\times10^{-4}$~USD, 2.08s) but least accurate (36.8\%); it has no task decomposition and receives at most one verification label per query, potentially limiting accuracy and slowing policy learning.
JOVE uses selective node-level verification to learn the model-selection policy and allocates more capable executors where their estimated benefit justifies resource cost.

% \vspace{-0.05in}

Figure~\ref{fig:radar} visualizes the accuracy-latency-cost trade-off reported in Table~\ref{tab:main-results} using the average metrics over the four datasets. The inference-level baselines generally achieve promising accuracy at substantially greater latency and cost than JOVE. The constrained heuristics use comparable resources to JOVE but achieve lower average accuracy, highlighting the value of learning task-dependent executor quality. JOVE achieves a favorable balance across the three metrics by jointly selecting executors and verification decisions under the resource-aware objective.

\begin{figure}[t]
  \centering
  
  \begin{subfigure}{0.45\textwidth}
    \centering
    \includegraphics[width=\linewidth]{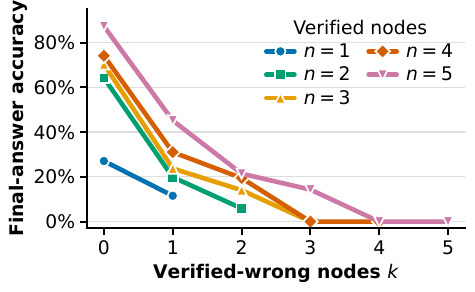}
    \caption{Node-level vs. final-answer correctness}
    \label{fig:nodes-correctness}
  \end{subfigure}
  \hfill % Adds horizontal spacing between the subfigures
  \begin{subfigure}{0.45\textwidth}
    \centering
    \includegraphics[width=\linewidth]{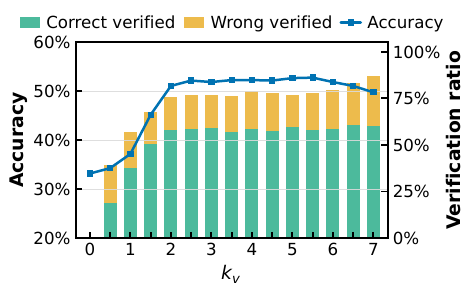}
    \caption{Effect of the verification weight $k_v$}
    \label{fig:kv}
  \end{subfigure}
  \hfill % Adds horizontal spacing between the subfigures
\caption{Verification analysis of JOVE. (a) Final-answer accuracy versus the number of verifier-labeled incorrect nodes $k$, grouped by the number of verified nodes $n$, with results pooled across all four datasets. (b) GPQA final-answer accuracy and the fraction of executed nodes verified, split by verifier-labeled correct and incorrect output, as the verification weight $k_v$
varies.}
  \label{fig:main_total}
  \vspace{-0.3cm}
\end{figure}

\textbf{Node-level correctness vs. final-answer correctness.}
JOVE learns entirely from node-level verification without direct feedback on final response quality, maximizing average node correctness (\eqref{eq:true-obj}). Figure~\ref{fig:nodes-correctness} shows how this additive surrogate relates to overall accuracy by grouping queries by total verified nodes $n \in \{1,\dots,5\}$ and verified-incorrect nodes $k$. The average final-answer accuracy drops monotonically as $k$ increases for every $n$, indicating that node-level errors are associated with lower final-answer accuracy. At fixed $k$, accuracy is generally higher for larger $n$. When all five verified nodes are labeled correct $(n=5,k=0)$, final-answer accuracy reaches 87.3\% rather than 100\%. This gap may reflect verifier noise or imperfect alignment between node-level success criteria and final-answer correctness.

\textbf{Selection of verification weights $k_v$.}
% UCB+kv*I-lambda*c(a,eta)
Figure~\ref{fig:kv} varies $k_v$ on GPQA, reporting final-answer accuracy together with the fraction of executed nodes that are verified, split by whether the verifier judged them correct or incorrect. 
% At $k_v < 1.5$, verification is less frequent and accuracy stays near 30--35\%, because small $k_v$ makes verification cost easily surpass the information gain.  
At $k_v<1.5$, verification is infrequent and accuracy remains relatively low.
A small $k_v$ makes the weighted information gain less likely to outweigh the priced verification cost, limiting the feedback available for quality learning.
For $k_v \in [2, 6]$, accuracy improves and remains stable, indicating that sufficient feedback is available and performance is relatively insensitive to the precise verification weight. With higher $k_v>6$, accuracy starts to decline, potentially because verification consumes a growing share of the budget, leaving fewer resources for execution. This regime also shows an increasing fraction of executed nodes that are verified and labeled incorrect.

\textbf{Satisfaction of the constraints.}
In Section~\ref{sec:method-problem}, we introduce a long-term API-cost budget and a per-query latency chance constraint. 
In practice, JOVE replaces the initially unknown model statistics with online estimates, so the oracle guarantees do not directly extend to the practical policy.
Constraint satisfaction then depends on resource-estimation accuracy and budget-price adaptation. Appendix~\ref{app:constraint-results} evaluates this behavior empirically: as learning progresses, interval-average costs decrease toward or below the budget, while deadline hit rates improve, reaching at least $96\%$ on all four datasets in the final interval against a $90\%$ target.

The appendix provides additional results on the effect of the online learning sequence (Appendix~\ref{app:sequence}), ablations of cost budgets and latency deadlines (Appendix~\ref{app:extra-ablation}), and comparisons of verification strategies (Appendix~\ref{app:verification}).

\section{Conclusion}
We introduce JOVE, an online framework that optimizes the execution of complex reasoning queries represented by task graphs. By jointly managing LLM routing and selective paid verification via per-query MILP, JOVE balances immediate accuracy with feedback acquisition for future allocation decisions. 
We establish sublinear quality-learning regret under the stated assumptions and demonstrate effective budget and deadline control empirically.
JOVE achieves competitive accuracy while substantially reducing cost and latency relative to baselines, providing a promising foundation for deploying efficient, multi-step LLM systems in resource-constrained environments.

\bibliography{iclr2027_conference}
\bibliographystyle{iclr2027_conference}
\newpage
\appendix
\section{Appendix}
% create table of content
\startcontents[appendix]
\printcontents[appendix]{}{1}{\setcounter{tocdepth}{3}}

\subsection{Oracle MILP formulation}
\label{app:oracle-milp}
Let $\mathcal S_t$ and $\mathcal T_t$ denote the source and sink nodes of
$\mathcal G_t$, respectively, and let $F_t^i$ be the finish time of node $i$
relative to the start of graph execution.
Combining the priced objective with the deterministic latency reduction gives
\begin{equation}\label{eq:oracle-milp}
\begin{alignedat}{2}
&\max_{\mathbf{z}_t,\{F_t^i\}_{i\in\mathcal{N}_t}}\quad\sum_{i\in\mathcal{N}_t}\sum_{a\in\mathcal{A}_t^i}\sum_{\eta\in\{0,1\}}
 \left[\frac{1}{|\mathcal{N}_t|}\bar y_t^i(a)-\lambda_t\bar c_t^i(a,\eta)\right]z_t^i(a,\eta)\\
\mathrm{s.t.}\quad
&\sum_{a\in\mathcal{A}_t^i}\sum_{\eta\in\{0,1\}}z_t^i(a,\eta)=1,
 &&\quad\forall i\in\mathcal N_t,\\
&z_t^i(a,\eta)\in\{0,1\},
 &&\quad\forall i,a,\eta,\\
&F_t^i\geq\sum_{a\in\mathcal{A}_t^i}\sum_{\eta\in\{0,1\}}\bar d_t^i(a)z_t^i(a,\eta),
 &&\quad\forall i\in\mathcal S_t,\\
&F_t^i\geq F_t^j+\sum_{a\in\mathcal{A}_t^i}\sum_{\eta\in\{0,1\}}\bar d_t^i(a)z_t^i(a,\eta),
 &&\quad\forall(j,i)\in\mathcal E_t,\\
&F_t^i\leq\mu_t-d_t^{\mathrm{plan}},
 &&\quad\forall i\in\mathcal T_t,\\
&F_t^i\geq0,
 &&\quad\forall i\in\mathcal N_t.
\end{alignedat}
\end{equation}
In the oracle MILP formulation, we assume the LLM expected performance $\bar y_t^i(a)$, expected invocation cost $\bar c_t^i(a,\eta)$ and node-level latency quantiles $\bar d_t^i(a)$ are all known, optimizing the executor selection and verification nodes using $z_t^i(a,\eta)$, and per-node finish time using $F_t^i$. 
In this oracle setting, verification delivers no gain on improving the estimation of LLM API qualities, it only increases the cost as defined in $\bar c_t^i(a,\eta)$.

\subsection{Theoretical Analysis}
\label{app:theory}
Throughout, uppercase letters denote random quantities and lowercase letters their realizations.
\subsubsection{Sufficient condition of the latency constraint}
\label{app:latency-proof}
\begin{lemma}[Sufficient condition of the latency constraint]\label{prop:latency-feasibility}
Fix risk shares $\{\delta_t^i\}_{i\in\mathcal N_t}$ satisfying
$\sum_i\delta_t^i\leq\delta_t$, and define $\bar d_t^i(a)$
by~\eqref{eq:latency-quantile}. For any allocation
determined by $\mathcal{H}_t$,
condition~\eqref{eq:path-quantile} implies the latency chance
constraint~\eqref{eq:latency-chance}.
\end{lemma}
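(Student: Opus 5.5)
Fix $\mathcal H_t$ and an allocation $\mathbf a_t$ determined by it, so each node $i$ has a fixed executor $a^i:=a_t^i$. Because the $z_t^i(a,\eta)$ encode a one-hot choice, the left-hand side of \eqref{eq:path-quantile} collapses to $\max_{P\in\mathcal P(\mathcal G_t)}\sum_{i\in P}\bar d_t^i(a^i)$, and the hypothesis reads $\sum_{i\in P}\bar d_t^i(a^i)\le\mu_t-d_t^{\mathrm{plan}}$ for every source-to-sink path $P$. The plan is to show that the event where every node finishes within its own quantile is contained in the deadline event, and then bound the complement with a union bound.

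Define the good event $E=\bigcap_{i\in\mathcal N_t}\{D_t^i(a^i)\le\bar d_t^i(a^i)\}$. On $E$, for every path $P$,
\[
\sum_{i\in P}D_t^i(a^i)\le\sum_{i\in P}\bar d_t^i(a^i)\le\mu_t-d_t^{\mathrm{plan}} .
\]
Taking the maximum over the finite set $\mathcal P(\mathcal G_t)$ gives $L_t\le\mu_t-d_t^{\mathrm{plan}}$. Hence $E\subseteq\{L_t\le\mu_t-d_t^{\mathrm{plan}}\}$.

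It then suffices to show $\Pr(E^c\mid\mathcal H_t,\mathbf a_t)\le\delta_t$. By the union bound,
\[
\Pr(E^c\mid\mathcal H_t,\mathbf a_t)\le\sum_i\Pr\bigl(D_t^i(a^i)>\bar d_t^i(a^i)\mid\mathcal H_t,\mathbf a_t\bigr).
\]
The union bound requires no independence among node latencies, which matches the remark in the main text.

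The main obstacle is showing that each summand is at most $\delta_t^i$. Two points need care.

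First, the quantile must be attained. The conditional CDF $d\mapsto\Pr(D_t^i(a)\le d\mid\mathcal H_t)$ is nondecreasing and right-continuous, so the infimum in \eqref{eq:latency-quantile} is achieved. This gives $\Pr(D_t^i(a)\le\bar d_t^i(a)\mid\mathcal H_t)\ge1-\delta_t^i$. The set is nonempty because $D_t^i(a)$ is a finite random variable, so its CDF tends to $1$.

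Second, the conditioning must match. Since $\mathbf a_t$ is determined by $\mathcal H_t$, conditioning on $(\mathcal H_t,\mathbf a_t)$ is the same as conditioning on $\mathcal H_t$ with $a^i$ fixed. This lets me substitute $a=a^i$ into the quantile bound. Strictly this needs the executed node's latency to depend only on the chosen executor given the history, which I will state as the interpretation of $D_t^i(a)$.

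Combining the two points, each summand is at most $\delta_t^i$. Summing gives $\Pr(E^c\mid\mathcal H_t,\mathbf a_t)\le\sum_i\delta_t^i\le\delta_t$, so
\[
\Pr(L_t\le\mu_t-d_t^{\mathrm{plan}}\mid\mathcal H_t,\mathbf a_t)\ge\Pr(E\mid\mathcal H_t,\mathbf a_t)\ge1-\delta_t,
\]
which is \eqref{eq:latency-chance}.
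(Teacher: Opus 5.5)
Your proposal is correct and follows the paper's proof essentially step by step. Both define the event that every node finishes within its assigned quantile, show via the path condition that this event lies inside the deadline event, and bound its complement by the conditional union bound using $\Pr(D_t^i(a_t^i)>\bar d_t^i(a_t^i)\mid\mathcal H_t)\le\delta_t^i$; your remarks on attainment of the quantile (right-continuity of the conditional CDF) and on conditioning on $\mathbf a_t$ being vacuous because it is $\mathcal H_t$-measurable spell out two points the paper uses only implicitly.
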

\begin{proof}
Fix the information $\mathcal{H}_t$ and let
$\mathbf a_t=(a_t^i)_{i\in\mathcal N_t}$ be the selected allocation.
Since the allocation is determined by $\mathcal{H}_t$, conditioning
additionally on $\mathbf a_t$ does not change the conditional
probabilities below.

For each node $i\in\mathcal N_t$, define the event
\[
E_t^i
:=
\left\{
D_t^i(a_t^i)>\bar d_t^i(a_t^i)
\right\},
\]
which occurs when its execution latency exceeds its assigned quantile.
By~\eqref{eq:latency-quantile},
\[
\Pr(E_t^i\mid\mathcal{H}_t)\leq\delta_t^i.
\]
The event that at least one node exceeds its assigned quantile is
$\bigcup_{i\in\mathcal N_t}E_t^i$. Applying the conditional union bound,
\[
\Pr\left(
\bigcup_{i\in\mathcal N_t}E_t^i
\,\middle|\,\mathcal{H}_t
\right)
\leq
\sum_{i\in\mathcal N_t}
\Pr(E_t^i\mid\mathcal{H}_t)
\leq
\sum_{i\in\mathcal N_t}\delta_t^i
\leq\delta_t.
\]
Consequently, the complementary event
\[
\mathcal E_t
:=
\bigcap_{i\in\mathcal N_t}(E_t^i)^c
=
\left\{
D_t^i(a_t^i)\leq\bar d_t^i(a_t^i)
\text{ for every }i\in\mathcal N_t
\right\}
\]
satisfies
$\Pr(\mathcal E_t\mid\mathcal{H}_t)\geq1-\delta_t$.

On $\mathcal E_t$, every source-to-sink path
$P\in\mathcal P(G_t)$ satisfies
\[
\sum_{i\in P}D_t^i(a_t^i)
\leq
\sum_{i\in P}\bar d_t^i(a_t^i).
\]
Taking the maximum over all such paths and
using~\eqref{eq:path-quantile} yields
\[
L_t
=
\max_{P\in\mathcal P(G_t)}
\sum_{i\in P}D_t^i(a_t^i)
\leq
\max_{P\in\mathcal P(G_t)}
\sum_{i\in P}\bar d_t^i(a_t^i)
\leq
\mu_t-d_t^{\mathrm{plan}}.
\]
Thus $\mathcal E_t\subseteq
\{L_t\leq\mu_t-d_t^{\mathrm{plan}}\}$, and therefore
\[
\Pr\left(
L_t\leq\mu_t-d_t^{\mathrm{plan}}
\,\middle|\,\mathcal{H}_t,\mathbf a_t
\right)
\geq
\Pr(\mathcal E_t\mid\mathcal{H}_t)
\geq1-\delta_t.
\]
This establishes~\eqref{eq:latency-chance}.
The union bound holds for arbitrarily dependent events, so no
independence assumption among node latencies is required.
\end{proof}

% REWRITTEN: oracle statement strengthened to the actual conservative set.
\subsubsection{Average-budget guarantee}
\label{app:dual-proof}
% \begin{assumption}[Feasible environment]\label{ass:system}
% Queries, together with their task graphs, resource requirements,
% and planning overheads, are drawn independently from the same
% distribution. Moreover, the number of the nodes and cost are bounded for each query:
% $1\leq|\mathcal N_t|\leq n_{\max}$ and
% $0\leq C_t(\mathbf{a}_t,\mathbf{h}_t)\leq C_{\max}$.
% There exists a stationary policy $\pi^0$ that selects allocations $(\mathbf{a}_t^0,\mathbf{h}_t^0)$ satisfying the latency constraint and the average cost meets \(
% \limsup_{T\to\infty}\frac1T
% \sum_{t=1}^T
% \mathbb E[C_t(\mathbf a_t^0,\mathbf h_t^0)]
% \leq\gamma.
% \)
% \end{assumption}

\begin{assumption}[Oracle environment and conservative feasibility]
\label{ass:system}
Let $\xi_t$ denote the current query context, including the
query, task graph, candidate executor sets, resource
requirements, planning overheads, and node risk shares.
The contexts $\{\xi_t\}$ are independent and identically
distributed. The node risk shares are determined by a fixed
rule from the current context.

Conditional on the current context and selected allocation,
the joint distribution of execution latencies and API costs
is time-invariant and does not depend on past rounds.
For every admissible allocation,
$1\leq |\mathcal N_t|\leq n_{\max}$ and
$0\leq C_t(\mathbf a_t,\mathbf h_t)\leq C_{\max}$
almost surely.

There exists a stationary policy $\pi^0$, depending only on
the current context and, if randomized, fresh independent
randomness, whose induced assignment and finish-time variables
satisfy
\[
(\mathbf z_t^0,\mathbf F_t^0)
\in \mathcal F_t(\mathbf d_t)
\qquad\text{almost surely},
\]
where $\mathbf d_t$ contains the true node-level latency
quantiles under the same risk-allocation rule used by the
oracle MILP. Moreover,
\[
\mathbb E_{\pi^0}
\left[C_t(\mathbf a_t^0,\mathbf h_t^0)\right]
\leq \gamma,
\]
where total cost includes planning, execution, and verification.
\end{assumption}

This assumption requires budget feasibility within the
conservative feasible set used by the oracle optimizer, because for JOVE, the node-quantile reduction
may exclude some chance-feasible allocations.
The assumption does not require independence among node
latencies within a query.

\begin{proposition}[Long-term budget feasibility]\label{prop:dual-budget}
Under Assumption~\ref{ass:system}, suppose each query maximizes
\eqref{eq:priced-objective} over
$\mathcal F_t({\mathbf d}_t)$ using true expected costs and qualities. 
Let the multiplier update be \[
\lambda_{t+1}
=
\bigl[\lambda_t+\alpha_\lambda(c_t-\gamma)\bigr]_+,
\qquad \lambda_1=0,
\]
where $c_t$ is the realized cost of solving query $x_t$, and $\alpha_\lambda>0$ controls the tradeoff between performance and cost.
Then $\mathbb E[\lambda_{T+1}]/T\to0$, and the long-term
budget constraint~\eqref{eq:budget} holds:
\[
\limsup_{T\to\infty}
\frac1T\sum_{t=1}^T\mathbb E[C_t(\mathbf{a}_t,\mathbf{h}_t)]\leq\gamma.
\]
\end{proposition}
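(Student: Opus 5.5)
We will use a standard drift-plus-penalty (Lyapunov) argument on the multiplier $\lambda_t$, comparing the oracle choice against the stationary feasible policy $\pi^0$ of Assumption~\ref{ass:system}.

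\textbf{Step 1: from the multiplier to the budget.} Since $\lambda_{t+1}\geq\lambda_t+\alpha_\lambda(c_t-\gamma)$, telescoping gives $\sum_{t=1}^T(c_t-\gamma)\leq\lambda_{T+1}/\alpha_\lambda$. Taking expectations,
\[
\frac1T\sum_{t=1}^T\mathbb E[C_t]-\gamma\leq\frac{\mathbb E[\lambda_{T+1}]}{\alpha_\lambda T}.
\]
The budget claim therefore follows once we show $\mathbb E[\lambda_{T+1}]/T\to0$.

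\textbf{Step 2: one-step drift.} We will use $\Phi_t=\lambda_t^2/2$. Nonexpansiveness of $[\cdot]_+$ gives
\[
\Phi_{t+1}-\Phi_t\leq\alpha_\lambda\lambda_t(c_t-\gamma)+\tfrac12\alpha_\lambda^2B,
\]
where $B=\max(C_{\max},\gamma)^2$. We condition on $\mathcal H_t$ and $\xi_t$. By Assumption~\ref{ass:system}, the conditional expected cost of the chosen allocation is $\bar C_t(\mathbf z_t)=m_t^{\mathrm{plan}}+\sum\bar c\,z$. The oracle maximizes $Q_t(\mathbf z)-\lambda_t\bar C_t(\mathbf z)$ over $\mathcal F_t(\mathbf d_t)$, where $Q_t=\frac{1}{|\mathcal N_t|}\sum\bar y z\in[0,1]$; the planning cost is a constant and does not affect the argmax. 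The $\pi^0$ allocation lies in the same set almost surely, because it depends only on $\xi_t$ and fresh randomness. Hence
\[
\lambda_t\bar C_t(\mathbf z_t)\leq\lambda_t\bar C_t(\mathbf z_t^0)+Q_t(\mathbf z_t)-Q_t(\mathbf z_t^0)\leq\lambda_t\bar C_t(\mathbf z_t^0)+1.
\]
Because $\xi_t$ is i.i.d. and independent of $\lambda_t$ (which is $\mathcal H_{t}$-measurable from past rounds), $\mathbb E[\lambda_t\bar C_t(\mathbf z_t^0)]=\mathbb E[\lambda_t]\,\mathbb E_{\pi^0}[C_t]\leq\gamma\,\mathbb E[\lambda_t]$. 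Combining these bounds yields
\[
\mathbb E[\Phi_{t+1}-\Phi_t]\leq\alpha_\lambda+\tfrac12\alpha_\lambda^2B.
\]

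\textbf{Step 3: conclusion.} Summing from $t=1$ with $\lambda_1=0$ gives $\mathbb E[\lambda_{T+1}^2]\leq T(2\alpha_\lambda+\alpha_\lambda^2B)$. By Jensen's inequality, $\mathbb E[\lambda_{T+1}]\leq\sqrt{T(2\alpha_\lambda+\alpha_\lambda^2B)}=O(\sqrt T)$, so $\mathbb E[\lambda_{T+1}]/T\to0$. Step 1 then gives the budget guarantee with an $O(1/\sqrt T)$ violation that vanishes.

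\textbf{Main obstacle.} The delicate step is the conditioning in Step 2. We must justify that the realized cost $c_t$ has conditional mean $\bar C_t(\mathbf z_t)$ given $(\mathcal H_t,\xi_t)$; this follows from the time-invariance and no-dependence-on-past clause for the cost distribution. We must also justify that the comparator's expected cost factorizes against $\lambda_t$, which uses the i.i.d. contexts and the fact that $\pi^0$ is stationary and history-independent. Some care is needed because the oracle quantiles $\mathbf d_t$ are defined conditionally on $\mathcal H_t$. Under the stated time-invariance they depend only on $\xi_t$, so the feasible sets coincide for both policies.
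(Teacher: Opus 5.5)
Your proposal is correct and follows essentially the same route as the paper. Both arguments bound the quadratic drift of $\lambda_t$, compare the oracle against the stationary policy $\pi^0$ in $\mathcal F_t(\mathbf d_t)$, use independence of $\lambda_t$ from the current context to get $\mathbb E[\lambda_t(c_t-\gamma)]\leq 1$, sum to $\mathbb E[\lambda_{T+1}^2]=O(T)$, apply Jensen, and conclude with the telescoped inequality $\lambda_{t+1}-\lambda_t\geq\alpha_\lambda(c_t-\gamma)$; your explicit conditioning on $(\mathcal H_t,\xi_t)$ to replace $c_t$ by its conditional mean is, if anything, slightly more careful than the paper's one-line ``oracle optimality, followed by taking expectations.''
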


% REWRITTEN: correct conditioning, common step-size symbol, direct rate proof.
\begin{proof}
Let $ \mathcal{H}_{t-1}$ denote the history before the current context arrives,
so $\lambda_t$ is $\mathcal{H}_{t-1}$-measurable. 
Define the expected average node quality as
\[
U_t(\mathbf a_t)
=
\frac{1}{|\mathcal N_t|}
\sum_{i\in\mathcal N_t}\bar y_t^i(a_t^i),
\qquad
0\leq U_t(\mathbf a_t)\leq1.
\]
Let $(\mathbf a_t^0,\mathbf h_t^0)$ denote the allocation selected
by a feasible policy $\pi^0$, and $(\mathbf a_t^*,\mathbf h_t^*)$ denote the optimal allocation to maximize the priced objective \eqref{eq:priced-objective}.

Since current-query information is identically distributed across
rounds and the conditional cost laws do not change, the expected
cost of policy $\pi_0$ is a constant:
\[
\mathbb E[C_t(\mathbf a_t^0,\mathbf h_t^0)]=m_0.
\]
Its long-term budget feasibility therefore implies
\[
m_0
=
\limsup_{T\to\infty}\frac1T
\sum_{t=1}^T
\mathbb E[C_t(\mathbf a_t^0,\mathbf h_t^0)]
\leq\gamma.
\]
Moreover, $\lambda_t$ depends only on past observations, whereas
the conditional expected cost under $\pi^0$ depends only on
the independent current-query information. Hence
\[
\mathbb E[
\lambda_t C_t(\mathbf a_t^0,\mathbf h_t^0)]
=
m_0\,\mathbb E[\lambda_t].
\]

The planner cost is common to all allocations of the current query,
so including it does not change the oracle optimizer.
Oracle optimality, followed by taking expectations, gives
\[
\mathbb E[
U_t(\mathbf a_t^*)-\lambda_t c_t]
\geq
\mathbb E[
U_t(\mathbf a_t^0)
-\lambda_t C_t(\mathbf a_t^0,\mathbf h_t^0)].
\]
Rearranging and using $0\leq U_t\leq1$, we obtain
\begin{align*}
\mathbb E[\lambda_t(c_t-\gamma)]
&\leq
1+\mathbb E[
\lambda_t C_t(\mathbf a_t^0,\mathbf h_t^0)]
-\gamma\mathbb E[\lambda_t]\\
&=
1+(m_0-\gamma)\mathbb E[\lambda_t]
\leq1.
\end{align*}

Set
\[
B=\frac{\alpha_\lambda}{2}
\max\{\gamma^2,(C_{\max}-\gamma)^2\}.
\]
Using $[u]_+^2\leq u^2$ and $0\leq c_t\leq C_{\max}$,
the multiplier update yields
\[
\frac{\lambda_{t+1}^2-\lambda_t^2}{2\alpha_\lambda}
\leq
\lambda_t(c_t-\gamma)+B.
\]
Taking expectations, summing over $t=1,\ldots,T$, and using
$\lambda_1=0$ gives
\[
\mathbb E[\lambda_{T+1}^2]
\leq2\alpha_\lambda(B+1)T.
\]
Consequently,
\[
\frac{\mathbb E[\lambda_{T+1}]}{T}
\leq
\frac{\sqrt{\mathbb E[\lambda_{T+1}^2]}}{T}
\leq
\sqrt{\frac{2\alpha_\lambda(B+1)}{T}}
\longrightarrow0.
\]

Finally, the update also implies
\[
\lambda_{t+1}-\lambda_t
\geq\alpha_\lambda(c_t-\gamma).
\]
Summing over rounds and taking expectations establishes
\[
\frac1T\sum_{t=1}^T
\mathbb E[C_t(\mathbf a_t^*,\mathbf h_t^*)]
=
\frac1T\sum_{t=1}^T\mathbb E[c_t]
\leq
\gamma+
\frac{\mathbb E[\lambda_{T+1}]}{\alpha_\lambda T}.
\]
Taking the upper limit as $T\to\infty$ proves the long-term
budget constraint.
\end{proof}

\subsubsection{LLM quality learning analysis}
\label{app:regret-proof}
% NEW: make the conditional noise requirement precise for adaptive batches.
\textbf{Selection of $\beta_t$.}
To facilitate the theoretical analysis, we define a standard confidence radius for regularized linear regression \citep{abbasi2011improved} to ensure that the uncertainty bonus
$\beta_t u_t^i(a)$ covers the quality-estimation error with
high probability:
\begin{equation}\label{eq:confidence-radius}
\beta_t
=
\sqrt{\lambda_{\mathrm{reg}}}S
+
R\sqrt{
2\log(1/\delta_{\mathrm{conf}})
+d\log\left(
1+\frac{|\mathcal D_{t-1}|L^2}
{d\lambda_{\mathrm{reg}}}
\right)}.
\end{equation}
The first term accounts for regularization bias, while the
second accounts for noise in the verified feedback.
Under Assumption~\ref{ass:quality}, this $\beta_t$ guarantees the following inequality with probability at least $1-\delta_{\mathrm{conf}}$:
\[
\|\hat\theta_t-\theta^\star\|_{G_{t-1}}
\leq\beta_t
\qquad\text{for all }t,
\]
where $\|v\|_G=\sqrt{v^\top Gv}$. This can be proved following the same steps as in \cite[Theorem~2]{abbasi2011improved}. 
The result applies to our verified observations because their
features are selected before observing the corresponding
conditionally zero-mean, sub-Gaussian label errors $\epsilon_t^i$ as defined in Assumption~\ref{ass:quality}.
Consequently, by Cauchy--Schwarz, we can show:
\[
\left|
\phi_t^i(a)^\top\hat\theta_t-\bar y_t^i(a)
\right|
\leq\beta_t u_t^i(a).
\]
$\beta_t$ calibrates the uncertainty bonus so that the
resulting UCB is an upper bound on true model quality on this
high-probability event.
For a fixed experimental horizon $T$, we may use a constant $\beta_t=\beta$ for all $t\leq T$. The theoretical guarantees remain valid provided that $\beta$ upper-bounds the confidence radii in~\eqref{eq:confidence-radius} over this horizon. 
Choosing a larger constant $\beta$ preserves the confidence guarantee, although it may encourage excessive exploration and increase the regret bound. As verified observations accumulate, uncertainty decreases in the feature directions they cover, reducing the corresponding exploration bonuses.

% Order the selected verified features within each query before its labels are observed, and concatenate these orders across queries. Assumption~\ref{ass:quality} requires each label error to have zero conditional mean and conditional moment-generating function at most $\exp(s^2R^2/2)$ for every $s\in\mathbb R$, given its selected feature and all preceding observations, including earlier labels in the same query. This martingale-noise condition permits adaptive feature selection and is the condition used below for self-normalized concentration. It requires unbiased feedback for true expected correctness; fixing a verifier alone does not ensure this property.

\textbf{Sublinear regret in quality learning.}
We now prove Theorem~\ref{thm:sublinear-regret}.
We first bound the cumulative uncertainty of the verified
task-model pairs. Notice that for each query, its selected nodes for verification are decided all at once by solving the MILP as described in JOVE.
Let $\mathcal V_t=\{i\in\mathcal N_t:h_t^i=1\}$ denote the
set of nodes selected for verification, and write
$\phi_{t,i}=\phi_t^i(a_t^i)$ for convenience. 
From round $t-1$ to $t$, the information matrix updates as
\[
G_t
=
G_{t-1}
+\sum_{i\in\mathcal V_t}\phi_{t,i}\phi_{t,i}^{\top},
\qquad
G_0=\lambda_{\mathrm{reg}}I.
\]
Recall that
\[
[u_t^i(a_t^i)]^2
=
\phi_{t,i}^{\top}G_{t-1}^{-1}\phi_{t,i}.
\]
Define
\[
N:=Tn_{\max},
\qquad
C_{\mathrm{bat}}
:=
\left(
1+\frac{(n_{\max}-1)L^2}{\lambda_{\mathrm{reg}}}
\right)
\left(
1+\frac{L^2}{\lambda_{\mathrm{reg}}}
\right),
\]
and
\[
\Gamma_T
:=
C_{\mathrm{bat}}\,d
\log\left(
1+\frac{NL^2}{d\lambda_{\mathrm{reg}}}
\right).
\]
Here, $N$ upper-bounds the total number of nodes over
$T$ queries, and $C_{\mathrm{bat}}$ accounts for the use
of a shared information matrix within each query.

\begin{lemma}[Batched uncertainty bound]
\label{lem:batch-potential}
Under Assumptions~\ref{ass:quality}--\ref{ass:regret},
\[
\sum_{t=1}^T\sum_{i\in\mathcal N_t}
h_t^i[u_t^i(a_t^i)]^2
\leq \Gamma_T,
\qquad
\sum_{t=1}^T\sum_{i\in\mathcal N_t}
h_t^i I_t^i(a_t^i)
\leq \frac{\Gamma_T}{2}.
\]
\end{lemma}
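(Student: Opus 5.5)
The plan is the standard elliptical-potential argument, adapted to batched updates. Within a query, all verification decisions are made simultaneously with the stale matrix $G_{t-1}$. I would therefore relate each stale uncertainty to a sequential, rank-one-updated uncertainty, and then apply the usual log-determinant telescoping.

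\textbf{Step 1 (stale vs.\ sequential matrices).} Fix $t$ and order $\mathcal V_t=\{i_1,\dots,i_{m}\}$ arbitrarily, with $m\le n_{\max}$. Define
\[
G_{t-1,k}=G_{t-1}+\sum_{j<k}\phi_{t,i_j}\phi_{t,i_j}^\top ,
\]
so that $G_{t-1,1}=G_{t-1}$ and $G_{t-1,m+1}=G_t$. Since $\|\phi\|_2\le L$, each rank-one term satisfies $\phi\phi^\top\preceq L^2 I$. Moreover $G_{t-1}\succeq\lambda_{\mathrm{reg}}I$, so $L^2 I\preceq (L^2/\lambda_{\mathrm{reg}})G_{t-1}$. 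Together these give
\[
G_{t-1,k}\preceq\Bigl(1+\tfrac{(n_{\max}-1)L^2}{\lambda_{\mathrm{reg}}}\Bigr)G_{t-1},
\]
and inverting yields
\[
[u_t^{i_k}]^2=\phi_{t,i_k}^\top G_{t-1}^{-1}\phi_{t,i_k}\le \Bigl(1+\tfrac{(n_{\max}-1)L^2}{\lambda_{\mathrm{reg}}}\Bigr)\,x_{t,k},
\qquad
x_{t,k}:=\phi_{t,i_k}^\top G_{t-1,k}^{-1}\phi_{t,i_k}.
\]
This produces the first factor of $C_{\mathrm{bat}}$.

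\textbf{Step 2 (sequential potential).} By the matrix determinant lemma,
\[
\log\det G_{t-1,k+1}-\log\det G_{t-1,k}=\log(1+x_{t,k}).
\]
Concatenating over all verified nodes in all queries, the increments telescope to $\log\bigl(\det G_T/\det G_0\bigr)$. Because $x_{t,k}\le L^2/\lambda_{\mathrm{reg}}\le 1$ (this is where the assumption $\lambda_{\mathrm{reg}}\ge L^2$ enters), the inequality $\log(1+x)\ge x/(1+x)$ gives
\[
x_{t,k}\le\Bigl(1+\tfrac{L^2}{\lambda_{\mathrm{reg}}}\Bigr)\log(1+x_{t,k}).
\]
This produces the second factor of $C_{\mathrm{bat}}$.

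\textbf{Step 3 (determinant bound).} By AM--GM on the eigenvalues,
\[
\det G_T\le\bigl(\operatorname{tr}G_T/d\bigr)^d,
\qquad
\operatorname{tr}G_T\le d\lambda_{\mathrm{reg}}+NL^2,
\]
since at most $N=Tn_{\max}$ features are ever verified. Hence
\[
\log\frac{\det G_T}{\det G_0}\le d\log\Bigl(1+\frac{NL^2}{d\lambda_{\mathrm{reg}}}\Bigr).
\]
Chaining Steps 1--3 gives the first claim. The second claim then follows from
\[
I_t^i(a)=\tfrac12\log\bigl(1+u_t^i(a)^2\bigr)\le\tfrac12\,u_t^i(a)^2 .
\]

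The only delicate step is Step 1: the staleness of $G_{t-1}$ within a batch. It has to be controlled uniformly without assuming anything about how many nodes are verified, beyond the bound $|\mathcal V_t|\le n_{\max}$. Using the Loewner bound through $\lambda_{\min}(G_{t-1})\ge\lambda_{\mathrm{reg}}$ gives exactly the stated constant. Everything else is the textbook argument of \citet{abbasi2011improved}.
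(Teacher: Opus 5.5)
Your proposal is correct and matches the paper's proof essentially step for step: the Loewner comparison $G_{t-1,k}\preceq(1+(n_{\max}-1)L^2/\lambda_{\mathrm{reg}})G_{t-1}$ between the stale and sequential matrices, the matrix-determinant-lemma telescoping with $x\le(1+L^2/\lambda_{\mathrm{reg}})\log(1+x)$, the AM--GM determinant--trace bound, and $\log(1+x)\le x$ for the information gain. One small correction: Step 2 does not use $\lambda_{\mathrm{reg}}\ge L^2$, because $x_{t,k}\le L^2/\lambda_{\mathrm{reg}}$ alone gives $x_{t,k}\le(1+x_{t,k})\log(1+x_{t,k})\le(1+L^2/\lambda_{\mathrm{reg}})\log(1+x_{t,k})$, and in the paper that assumption is used only later, in the burn-in lemma, to show $C_{\mathrm{bat}}\le 2n_{\max}$.
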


\begin{proof}
\textit{Step 1: Compare batch and sequential uncertainties.}
For each query $t$, order its verified features as
$\phi_{t,1},\ldots,\phi_{t,b_t}$, where
$b_t=|\mathcal V_t|\leq n_{\max}$.
Define the intermediate matrices
\[
V_{t,0}=G_{t-1},
\qquad
V_{t,j}
=
V_{t,j-1}+\phi_{t,j}\phi_{t,j}^{\top},
\quad j=1,\ldots,b_t.
\]
Thus, $V_{t,b_t}=G_t$.
These matrices are introduced only for the analysis;
the algorithm computes every uncertainty in query $t$
using $G_{t-1}$.

Since $\|\phi_{t,k}\|_2\leq L$, we have
$\phi_{t,k}\phi_{t,k}^{\top}\preceq L^2I$.
Also, $G_{t-1}\succeq\lambda_{\mathrm{reg}}I$ as it only adds positive semi-definite matrices per round, so
$I\preceq G_{t-1}/\lambda_{\mathrm{reg}}$.
Consequently, for each $j=1,\ldots,b_t$,
\[
\begin{aligned}
V_{t,j-1}
&=
G_{t-1}
+\sum_{k=1}^{j-1}\phi_{t,k}\phi_{t,k}^{\top}\\
&\preceq
G_{t-1}+(j-1)L^2I\\
&\preceq
\left(
1+\frac{(j-1)L^2}{\lambda_{\mathrm{reg}}}
\right)G_{t-1}\\
&\preceq
\underbrace{
\left(
1+\frac{(n_{\max}-1)L^2}{\lambda_{\mathrm{reg}}}
\right)
}_{=:\,\alpha_{\mathrm{bat}}}
G_{t-1}.
\end{aligned}
\]
Inversion reverses the positive-definite matrix order.
Therefore,
\[
V_{t,j-1}^{-1}
\succeq
\alpha_{\mathrm{bat}}^{-1}G_{t-1}^{-1},
\qquad\text{or equivalently}\qquad
G_{t-1}^{-1}
\preceq
\alpha_{\mathrm{bat}}V_{t,j-1}^{-1}.
\]
Writing
\[
q_{t,j}
:=
\phi_{t,j}^{\top}V_{t,j-1}^{-1}\phi_{t,j},
\]
By using the above inequality, we obtain
\[
\phi_{t,j}^{\top}G_{t-1}^{-1}\phi_{t,j}
\leq
\alpha_{\mathrm{bat}}q_{t,j}.
\]
Summing over all verified nodes gives
\[
\sum_{t=1}^T\sum_{i\in\mathcal N_t}
h_t^i[u_t^i(a_t^i)]^2
\leq
\alpha_{\mathrm{bat}}
\sum_{t=1}^T\sum_{j=1}^{b_t}q_{t,j}.
\]

\textit{Step 2: Bound sequential uncertainties by
log-determinant increments.}
Because $V_{t,j-1}\succeq\lambda_{\mathrm{reg}}I$,
\[
0\leq q_{t,j}
\leq
\frac{\|\phi_{t,j}\|_2^2}{\lambda_{\mathrm{reg}}}
\leq
\frac{L^2}{\lambda_{\mathrm{reg}}}
=:r.
\]
Notice that for any $x\in[0,r]$,
\[
\log(1+x)
=
\int_0^x\frac{1}{1+s}\,ds
\geq
\frac{x}{1+r}.
\]
Hence $q_{t,j}\leq(1+r)\log(1+q_{t,j})$.
The matrix determinant lemma gives
\[
\begin{aligned}
\det(V_{t,j})
&=
\det(V_{t,j-1})
\left(
1+\phi_{t,j}^{\top}
V_{t,j-1}^{-1}\phi_{t,j}
\right)\\
&=
\det(V_{t,j-1})(1+q_{t,j}),
\end{aligned}
\]
and therefore
\[
\log(1+q_{t,j})
=
\log\det(V_{t,j})-\log\det(V_{t,j-1}).
\]
Summing these increments first within each query and
then across queries yields
\[
\begin{aligned}
\sum_{t=1}^T\sum_{j=1}^{b_t}\log(1+q_{t,j})
&=
\sum_{t=1}^T
\left[
\log\det(V_{t,b_t})-\log\det(V_{t,0})
\right]\\
&=
\sum_{t=1}^T
\left[
\log\det(G_t)-\log\det(G_{t-1})
\right]\\
&=
\log\frac{\det(G_T)}{\det(G_0)}.
\end{aligned}
\]
Queries with $b_t=0$ contribute zero to both sides.
Combining this identity with Step~1, and using
$C_{\mathrm{bat}}=\alpha_{\mathrm{bat}}(1+r)$, gives
\[
\sum_{t=1}^T\sum_{i\in\mathcal N_t}
h_t^i[u_t^i(a_t^i)]^2
\leq
C_{\mathrm{bat}}
\log\frac{\det(G_T)}{\det(G_0)}.
\]

\textit{Step 3: Bound the determinant ratio.}
The total number of verified observations satisfies
$\sum_{t=1}^T b_t\leq Tn_{\max}=N$.
Using $\operatorname{tr}(\phi\phi^\top)=\|\phi\|_2^2$,
we obtain
\[
\begin{aligned}
\operatorname{tr}(G_T)
&=
d\lambda_{\mathrm{reg}}
+\sum_{t=1}^T\sum_{j=1}^{b_t}
\|\phi_{t,j}\|_2^2\\
&\leq
d\lambda_{\mathrm{reg}}+NL^2.
\end{aligned}
\]
Let $\mu_1,\ldots,\mu_d>0$ be the eigenvalues of $G_T$.
The arithmetic--geometric mean inequality implies
\[
\det(G_T)
=
\prod_{\ell=1}^d\mu_\ell
\leq
\left(\frac{\sum_{\ell=1}^d\mu_\ell}{d}\right)^d
=
\left(\frac{\operatorname{tr}(G_T)}{d}\right)^d
\leq
\left(
\lambda_{\mathrm{reg}}+\frac{NL^2}{d}
\right)^d.
\]
Since $\det(G_0)=\lambda_{\mathrm{reg}}^d$,
\[
\log\frac{\det(G_T)}{\det(G_0)}
\leq
d\log\left(
1+\frac{NL^2}{d\lambda_{\mathrm{reg}}}
\right).
\]
Substituting this into the bound from Step~2 proves
\[
\sum_{t=1}^T\sum_{i\in\mathcal N_t}
h_t^i[u_t^i(a_t^i)]^2
\leq \Gamma_T.
\]

\textit{Step 4: Bound the cumulative information gain.}
By the definition of $I_t^i(a)$ and the inequality
$\log(1+x)\leq x$ for $x\geq0$,
\[
I_t^i(a)
=
\frac12\log\left(1+[u_t^i(a)]^2\right)
\leq
\frac12[u_t^i(a)]^2.
\]
Multiplying by $h_t^i\geq0$ and summing gives
\[
\sum_{t=1}^T\sum_{i\in\mathcal N_t}
h_t^i I_t^i(a_t^i)
\leq
\frac12
\sum_{t=1}^T\sum_{i\in\mathcal N_t}
h_t^i[u_t^i(a_t^i)]^2
\leq
\frac{\Gamma_T}{2},
\]
which proves the second claim.
\end{proof}

Here we provide a lemma to convert the lower bound of $N=Tn_{\mathrm{max}}$ to the lower bound of the horizon $T$. 
\begin{lemma}[Sufficient horizon length]\label{lem:burn-in}
Let $N=Tn_{\max}$ and recall that
\[
C_{\mathrm{bat}}
=
\left(1+\frac{(n_{\max}-1)L^2}{\lambda_{\mathrm{reg}}}\right)
\left(1+\frac{L^2}{\lambda_{\mathrm{reg}}}\right),
\qquad
\Gamma_T
=
C_{\mathrm{bat}}d
\log\left(1+\frac{NL^2}{d\lambda_{\mathrm{reg}}}\right).
\]
Under $\lambda_{\mathrm{reg}}\geq L^2$, we have
$1\leq C_{\mathrm{bat}}\leq2n_{\max}$.
Furthermore, $\Gamma_T\leq N$ whenever
\begin{equation}\label{eq:burn-in}
N\geq4C_{\mathrm{bat}}d\log(4C_{\mathrm{bat}}).
\end{equation}
In particular, a sufficient condition is
\[
T\geq8d\log(8n_{\max}).
\]
\end{lemma}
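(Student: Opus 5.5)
The plan is to treat the three claims in order: the bounds on $C_{\mathrm{bat}}$, then the implication $\Gamma_T\le N$, then the horizon condition. Each is elementary; the only real content is an inequality of the form $m\ge 4C\log(4C)\Rightarrow C\log(1+m)\le m$.

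\textbf{Bounds on $C_{\mathrm{bat}}$.} Set $r=L^2/\lambda_{\mathrm{reg}}$; the assumption $\lambda_{\mathrm{reg}}\ge L^2$ gives $0\le r\le 1$.
\begin{itemize}
\item Both factors $1+(n_{\max}-1)r$ and $1+r$ are at least $1$, so $C_{\mathrm{bat}}\ge 1$.
\item Since $r\le 1$, the first factor is at most $n_{\max}$ and the second at most $2$, so $C_{\mathrm{bat}}\le 2n_{\max}$.
\end{itemize}

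\textbf{Reduction of $\Gamma_T\le N$ to a scalar inequality.} Using $r\le1$ again, $\log(1+NL^2/(d\lambda_{\mathrm{reg}}))\le\log(1+N/d)$. Write $m=N/d$ and $C=C_{\mathrm{bat}}$. It then suffices to show
\[
C\log(1+m)\le m \quad\text{whenever } m\ge m_0:=4C\log(4C).
\]

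\textbf{Proof of the scalar inequality.} Let $f(m)=m-C\log(1+m)$. Its derivative $f'(m)=1-C/(1+m)$ is nonnegative for $m\ge C-1$.
\begin{itemize}
\item \emph{Monotonicity region.} Since $C\ge1$, we have $\log(4C)\ge\log 4>1$, so $m_0\ge 4C>C-1$. Hence $f$ is nondecreasing on $[m_0,\infty)$.
\item \emph{Value at $m_0$.} It remains to show $f(m_0)\ge0$, i.e. $4\log(4C)\ge\log(1+4C\log(4C))$, equivalently $(4C)^4\ge 1+4C\log(4C)$.
\item \emph{Checking that inequality.} Put $y=4C\ge4$. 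Using $\log y\le y$ gives $1+y\log y\le 1+y^2\le y^4$.
\end{itemize}
So $f(m)\ge0$ for all $m\ge m_0$, which yields $\Gamma_T\le N$ under \eqref{eq:burn-in}.

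\textbf{Horizon condition.} The map $x\mapsto 4x\log(4x)$ is increasing for $x\ge1$. Combined with $C\le 2n_{\max}$, this gives $4C\log(4C)\le 8n_{\max}\log(8n_{\max})$. Hence if $T\ge 8d\log(8n_{\max})$, then
\[
N=Tn_{\max}\ge 8dn_{\max}\log(8n_{\max})\ge 4Cd\log(4C),
\]
which is \eqref{eq:burn-in}.

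The only step needing care is the value check $f(m_0)\ge0$. The crude bound $\log y\le y$ suffices there, so I expect no real obstacle.
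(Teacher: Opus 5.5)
Your proof is correct and follows the paper's structure. You bound $C_{\mathrm{bat}}$ via $r=L^2/\lambda_{\mathrm{reg}}\in[0,1]$, reduce $\Gamma_T\le N$ to $C\log(1+x)\le x$ with $x=N/d$, and finish with monotonicity of $c\mapsto c\log(4c)$ together with $C_{\mathrm{bat}}\le 2n_{\max}$. The one difference is the scalar inequality: the paper uses the tangent line of $\log$ at $2C$ to get $\log(1+x)\le\log(2C)+x/(2C)\le x/C$, whereas you use that $m-C\log(1+m)$ is nondecreasing for $m\ge C-1$ plus the endpoint check $(4C)^4\ge 1+4C\log(4C)$, and both arguments are equally elementary and valid.
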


\begin{proof}
\textit{Step 1: Bound the batching constant.}
Write
\[
r:=\frac{L^2}{\lambda_{\mathrm{reg}}}\in[0,1],
\qquad
C:=C_{\mathrm{bat}}.
\]
Since $n_{\max}\geq1$,
\[
1\leq1+(n_{\max}-1)r\leq n_{\max},
\qquad
1\leq1+r\leq2.
\]
Multiplying these inequalities yields
\[
1\leq C=(1+(n_{\max}-1)r)(1+r)\leq2n_{\max}.
\]

\textit{Step 2: Establish $\Gamma_T\leq N$.}
Set $x:=N/d$. Then
\[
\Gamma_T=Cd\log(1+rx),
\qquad
N=dx.
\]
Because $r\leq1$ and the logarithm is increasing,
it suffices to show that $C\log(1+x)\leq x$.

By concavity, $\log y$ lies below its tangent at
$y=2C$. Thus, for every $y>0$,
\[
\log y
\leq
\log(2C)+\frac{y-2C}{2C}.
\]
Taking $y=1+x$ and using $C\geq1$ gives
\[
\begin{aligned}
\log(1+x)
&\leq
\log(2C)+\frac{1+x-2C}{2C}\\
&=
\log(2C)+\frac{x}{2C}
+\frac{1-2C}{2C}\\
&\leq
\log(2C)+\frac{x}{2C}.
\end{aligned}
\]
Condition~\eqref{eq:burn-in} is equivalent to
$x\geq4C\log(4C)$. Under this condition,
\[
\frac{x}{2C}
\geq2\log(4C)
\geq\log(2C),
\]
where the last inequality holds because $C\geq1$.
Consequently,
\[
\log(1+x)
\leq
\frac{x}{2C}+\frac{x}{2C}
=
\frac{x}{C}.
\]
Substituting into the expression for $\Gamma_T$ yields
\[
\Gamma_T
=
Cd\log(1+rx)
\leq
Cd\log(1+x)
\leq
Cd\frac{x}{C}
=
dx
=
N.
\]

\textit{Step 3: Derive a condition directly on $T$.}
The function $f(c)=c\log(4c)$ is increasing for
$c\geq1$, since
\[
f'(c)=\log(4c)+1>0.
\]
Using $1\leq C\leq2n_{\max}$, we therefore obtain
\[
4Cd\log(4C)
\leq
4(2n_{\max})d\log(8n_{\max})
=
8n_{\max}d\log(8n_{\max}).
\]
If $T\geq8d\log(8n_{\max})$, then
\[
N=n_{\max}T
\geq
8n_{\max}d\log(8n_{\max})
\geq
4Cd\log(4C).
\]
Thus, condition~\eqref{eq:burn-in} holds, and Step~2
implies $\Gamma_T\leq N$.
\end{proof}

\textbf{Proof of Theorem~\ref{thm:sublinear-regret}}
\begin{proof}
Write $n_t:=|\mathcal N_t|$ and $N:=Tn_{\max}$.
Then $\sum_{t=1}^T n_t\leq N$.
By the theorem statement, $c_v>0$. Define
\(
\beta_T^{\max}:=\max_{1\leq t\leq T}\beta_t.
\)
Use the confidence radii in~\eqref{eq:confidence-radius}.
Order the verified observations within each query before
observing their labels, and concatenate these orders
across queries. The selected features and verification
decisions are determined before observing the corresponding
labels. Under Assumption~\ref{ass:quality}, each label error
is zero-mean and $R$-sub-Gaussian given
the selected features and all preceding observations,
including earlier labels in the same query.
Thus, the confidence-ellipsoid result of
\citet[Theorem~2]{abbasi2011improved}, together with
the determinant--trace bound, gives
\[
\mathbb P(\mathcal E)\geq1-\delta_{\mathrm{conf}},
\qquad
\mathcal E
:=
\left\{
\|\hat\theta_t-\theta^\star\|_{G_{t-1}}
\leq\beta_t
\text{ for every }t\leq T
\right\}.
\]

For every candidate task--model pair, Cauchy--Schwarz gives
\[
\begin{aligned}
\left|
\phi_t^i(a)^\top\hat\theta_t-\bar y_t^i(a)
\right|
&=
\left|
\phi_t^i(a)^\top(\hat\theta_t-\theta^\star)
\right|\\
&\leq
\|\phi_t^i(a)\|_{G_{t-1}^{-1}}
\|\hat\theta_t-\theta^\star\|_{G_{t-1}}\\
&\leq
\beta_tu_t^i(a).
\end{aligned}
\]
Consequently, the untruncated optimistic estimate
\[
s_t^i(a)
:=
\phi_t^i(a)^\top\hat\theta_t+\beta_tu_t^i(a)
\]
satisfies
\[
\bar y_t^i(a)
\leq s_t^i(a)
\leq \bar y_t^i(a)+2\beta_tu_t^i(a).
\]
Since $\bar y_t^i(a)\in[0,1]$, clipping $s_t^i(a)$
to $[0,1]$ preserves its lower bound by $\bar y_t^i(a)$
and cannot increase its excess over $\bar y_t^i(a)$.
Hence
\begin{equation}\label{eq:optimism-event}
0
\leq
\mathrm{UCB}_t^i(a)-\bar y_t^i(a)
\leq
2\beta_tu_t^i(a)
\end{equation}
simultaneously for all rounds and candidate pairs.

The assumed horizon condition and
Lemma~\ref{lem:burn-in} imply $\Gamma_T\leq N$.
For $\Gamma_T>0$, define
\[
\tau_T:=\left(\frac{\Gamma_T}{N}\right)^{1/3}
\in(0,1],
\qquad
K_T:=\frac{2n_{\max}c_v}{\log(1+\tau_T^2)}.
\]
The choice of $\tau_T$ gives the identity
\(
\Gamma_T=N\tau_T^3,
\)
which will be used to simplify the cumulative regret bound.
Consider any selected task--model pair satisfying
$u_t^i(a_t^i)>\tau_T$.
By the definition of the information gain,
\[
I_t^i(a_t^i)
=
\frac12\log\left(1+[u_t^i(a_t^i)]^2\right)
>
\frac12\log(1+\tau_T^2).
\]
Assumption~\ref{ass:regret} also gives
\[
0\leq
\lambda_t\hat{\widetilde m}_t^i
\leq
\lambda_{\max}\widetilde m_{\max}
=
c_v.
\]
Since $k_v\geq K_T>0$, we obtain
\[
\begin{aligned}
\frac{k_v}{n_t}I_t^i(a_t^i)
-\lambda_t\hat{\widetilde m}_t^i
&\geq
\frac{K_T}{n_t}I_t^i(a_t^i)-c_v\\
&\geq
\frac{K_T}{n_{\max}}I_t^i(a_t^i)-c_v\\
&>
\frac{K_T}{2n_{\max}}\log(1+\tau_T^2)-c_v\\
&=0.
\end{aligned}
\]
For a fixed executor choice, changing $h_t^i$ from
$0$ to $1$ changes the online objective by precisely
this quantity, because the verification bonus in
~\eqref{eq:online-milp} is normalized by $n_t$, while
the priced verification cost is not. This change leaves
feasibility unchanged.
Therefore, if $h_t^i=0$, this change would strictly
improve the objective, contradicting exact optimality.
We conclude that
\begin{equation}\label{eq:forced-verification}
u_t^i(a_t^i)>\tau_T
\quad\Longrightarrow\quad
h_t^i=1.
\end{equation}

For a feasible allocation $\mathbf z$, define
\[
\widehat J_t(\mathbf z)
:=
\sum_{i\in\mathcal N_t}
\sum_{a\in\mathcal A_t^i}
\sum_{\eta\in\{0,1\}}
\left[
\frac{1}{n_t}\mathrm{UCB}_t^i(a)
-\lambda_t\hat c_t^i(a,\eta)
\right]z^i(a,\eta),
\]
and
\[
B_t(\mathbf z)
:=
\frac{1}{n_t}
\sum_{i\in\mathcal N_t}
\sum_{a\in\mathcal A_t^i}
\sum_{\eta\in\{0,1\}}
\eta I_t^i(a)z^i(a,\eta).
\]
By ~\eqref{eq:online-milp}, JOVE maximizes
$\widehat J_t(\mathbf z)+k_vB_t(\mathbf z)$
over the same feasible set used by the comparator.

On $\mathcal E$, optimism and the nonnegativity of
the allocation variables imply
\[
J_t(\mathbf z_t^\star)
\leq
\widehat J_t(\mathbf z_t^\star).
\]
Exact optimality of JOVE gives
\[
\widehat J_t(\mathbf z_t)+k_vB_t(\mathbf z_t)
\geq
\widehat J_t(\mathbf z_t^\star)
+k_vB_t(\mathbf z_t^\star).
\]
Rearranging this inequality and using optimism yields
\[
\begin{aligned}
J_t(\mathbf z_t^\star)-J_t(\mathbf z_t)
&\leq
\widehat J_t(\mathbf z_t^\star)-J_t(\mathbf z_t)\\
&\leq
\widehat J_t(\mathbf z_t)-J_t(\mathbf z_t)
+k_v\left[
B_t(\mathbf z_t)-B_t(\mathbf z_t^\star)
\right]\\
&\leq
\widehat J_t(\mathbf z_t)-J_t(\mathbf z_t)
+k_vB_t(\mathbf z_t),
\end{aligned}
\]
where the last step uses $k_v\geq0$ and
$B_t(\mathbf z_t^\star)\geq0$.

At the same allocation $\mathbf z_t$, the resource-cost
terms in $\widehat J_t$ and $J_t$ are identical and cancel.
Since exactly one executor--verification action is
selected for each node,
\[
\begin{aligned}
\widehat J_t(\mathbf z_t)-J_t(\mathbf z_t)
=
\frac{1}{n_t}
\sum_{i\in\mathcal N_t}
\left[
\mathrm{UCB}_t^i(a_t^i)-\bar y_t^i(a_t^i)
\right]\leq
\frac{2\beta_t}{n_t}
\sum_{i\in\mathcal N_t}u_t^i(a_t^i),
\end{aligned}
\]
and
\[
B_t(\mathbf z_t)
=
\frac{1}{n_t}
\sum_{i\in\mathcal N_t}
h_t^iI_t^i(a_t^i).
\]
Combining these expressions proves
\begin{equation}\label{eq:regret-one-query}
\begin{aligned}
J_t(\mathbf z_t^\star)-J_t(\mathbf z_t)
\leq{}
\frac{2\beta_t}{n_t}
\sum_{i\in\mathcal N_t}u_t^i(a_t^i)+
\frac{k_v}{n_t}
\sum_{i\in\mathcal N_t}h_t^iI_t^i(a_t^i).
\end{aligned}
\end{equation}
Only the resource terms evaluated at the same allocation
cancel; no equality of costs between JOVE and the
comparator is required.

Partition the executed nodes according to whether their
uncertainty exceeds $\tau_T$:
\[
\mathcal N_t^{>}
:=
\{i\in\mathcal N_t:u_t^i(a_t^i)>\tau_T\},
\qquad
\mathcal N_t^{\leq}
:=
\{i\in\mathcal N_t:u_t^i(a_t^i)\leq\tau_T\}.
\]
For the nodes in $\mathcal N_t^{\leq}$, each uncertainty
is at most $\tau_T$. Since there are at most $N$
executed nodes in total,
\[
\sum_{t=1}^T\sum_{i\in\mathcal N_t^{\leq}}u_t^i(a_t^i)
\leq
\tau_T\sum_{t=1}^T|\mathcal N_t^{\leq}|
\leq N\tau_T.
\]
For every node in $\mathcal N_t^{>}$,
~\eqref{eq:forced-verification} guarantees
$h_t^i=1$. Moreover, $u_t^i(a_t^i)>\tau_T$ implies
\[
u_t^i(a_t^i)
\leq
\frac{[u_t^i(a_t^i)]^2}{\tau_T}.
\]
Therefore, Lemma~\ref{lem:batch-potential} gives
\[
\begin{aligned}
\sum_{t=1}^T\sum_{i\in\mathcal N_t^{>}}u_t^i(a_t^i)
&\leq
\frac{1}{\tau_T}
\sum_{t=1}^T\sum_{i\in\mathcal N_t^{>}}
[u_t^i(a_t^i)]^2=
\frac{1}{\tau_T}
\sum_{t=1}^T\sum_{i\in\mathcal N_t^{>}}
h_t^i[u_t^i(a_t^i)]^2\\
&\leq
\frac{1}{\tau_T}
\sum_{t=1}^T\sum_{i\in\mathcal N_t}
h_t^i[u_t^i(a_t^i)]^2\\
&\leq
\frac{\Gamma_T}{\tau_T}.
\end{aligned}
\]
Adding the two contributions yields
\begin{equation}\label{eq:cumulative-uncertainty}
\sum_{t=1}^T\sum_{i\in\mathcal N_t}u_t^i(a_t^i)
\leq
N\tau_T+\frac{\Gamma_T}{\tau_T}.
\end{equation}

Sum~\eqref{eq:regret-one-query} over $t$.
Using $n_t\geq1$, $\beta_t\leq\beta_T^{\max}$,
and the nonnegativity of uncertainty and information gain,
we obtain
\[
\begin{aligned}
\mathcal R_T
\leq
2\beta_T^{\max}
\sum_{t=1}^T\sum_{i\in\mathcal N_t}u_t^i(a_t^i)
+
k_v\sum_{t=1}^T\sum_{i\in\mathcal N_t}
h_t^iI_t^i(a_t^i)\leq
2\beta_T^{\max}
\left(
N\tau_T+\frac{\Gamma_T}{\tau_T}
\right)
+\frac{k_v\Gamma_T}{2},
\end{aligned}
\]
where the last inequality uses
\eqref{eq:cumulative-uncertainty} and the second bound
in Lemma~\ref{lem:batch-potential}.

We now bound the two terms separately.
Because $\Gamma_T=N\tau_T^3$ and $\tau_T\leq1$,
\[
N\tau_T+\frac{\Gamma_T}{\tau_T}
=
N\tau_T+N\tau_T^2
\leq
2N\tau_T.
\]
Hence
\[
2\beta_T^{\max}
\left(
N\tau_T+\frac{\Gamma_T}{\tau_T}
\right)
\leq
4\beta_T^{\max}N\tau_T.
\]

For the information-gain term, note that
$0\leq\tau_T^2\leq1$ implies
\[
\log(1+\tau_T^2)
=
\int_0^{\tau_T^2}\frac{1}{1+s}\,ds
\geq
\frac{\tau_T^2}{2}.
\]
Using the upper endpoint $k_v\leq2K_T$ gives
\[
\begin{aligned}
\frac{k_v\Gamma_T}{2}
&\leq K_T\Gamma_T=
\frac{2n_{\max}c_v\Gamma_T}{\log(1+\tau_T^2)}\leq
\frac{4n_{\max}c_v\Gamma_T}{\tau_T^2}=
4n_{\max}c_vN\tau_T.
\end{aligned}
\]
Combining the two bounds and substituting the definition
of $\tau_T$, we conclude that
\[
\begin{aligned}
\mathcal R_T
&\leq
4(n_{\max}c_v+\beta_T^{\max})N\tau_T\\
&=
4(n_{\max}c_v+\beta_T^{\max})N^{2/3}\Gamma_T^{1/3}\\
&=
4\left(n_{\max}c_v+\max_{t\leq T}\beta_t\right)
(n_{\max}T)^{2/3}\Gamma_T^{1/3}.
\end{aligned}
\]
This inequality holds on $\mathcal E$, whose probability
is at least $1-\delta_{\mathrm{conf}}$.

Since $|\mathcal D_{t-1}|\leq N$, the confidence radii
satisfy the deterministic upper bound
\[
\beta_T^{\max}
\leq
\sqrt{\lambda_{\mathrm{reg}}}S
+
R\sqrt{
2\log(1/\delta_{\mathrm{conf}})
+
d\log\left(
1+\frac{NL^2}{d\lambda_{\mathrm{reg}}}
\right)}.
\]
For fixed $n_{\max}$, $L$, $S$, $R$,
$\lambda_{\mathrm{reg}}$, $\lambda_{\max}$,
$\widetilde m_{\max}$, and $\delta_{\mathrm{conf}}$,
we have
\[
\Gamma_T
=
O\!\left(d\log(1+T)\right),
\qquad
n_{\max}c_v+\beta_T^{\max}
=
O\!\left(\sqrt{d\log(1+T)}\right).
\]
Substituting these estimates into the regret bound gives
\[
\begin{aligned}
\mathcal R_T
&=
O\!\left(
[d\log(1+T)]^{1/2}
T^{2/3}
[d\log(1+T)]^{1/3}
\right)\\
&=
O\!\left(
d^{5/6}T^{2/3}[\log(1+T)]^{5/6}
\right).
\end{aligned}
\]
In particular, for fixed $d$,
\[
\frac{\mathcal R_T}{T}
=
O\!\left(
T^{-1/3}[\log(1+T)]^{5/6}
\right)
\longrightarrow0.
\]
\end{proof}

The lower endpoint ensures that sufficiently uncertain
selections are verified, while the upper endpoint
controls the cumulative information-gain bonus.
The factor two is not essential: for any fixed $\rho\geq1$,
choosing $k_v\in[K_T,\rho K_T]$ gives
\(
\mathcal R_T
\leq
\left(4\beta_T^{\max}+2\rho n_{\max}c_v\right)
N^{2/3}\Gamma_T^{1/3}
\)
by the same argument.
% The regret bound compares quality objectives on JOVE's
% estimated feasible sets using the same adaptive budget
% prices. We do not establish optimal tuning of $k_v$,
% calibration of resource estimates, regret for final-answer
% accuracy, or minimax optimality.
The horizon condition $T\geq 8d\log(8n_\mathrm{max})$ is sufficient but need
not be necessary: it is used to ensure
$\Gamma_T\leq N$ in the proof steps.
This condition implies that lower $d$ (the dimension of the feature vector) yields a smaller condition on $T$, potentially improving the learning speed. Also, we point out that a small $d$ may restrict the expressivity of the feature vector, which may potentially degrade the model quality estimation.

\paragraph{Theory--practice scope.}
Theorem~\ref{thm:sublinear-regret} is intended as an existence result showing that a nonempty range of verification weights ($k_v$) yields sublinear quality-learning regret under the stated assumptions, rather than as a practical tuning rule. Its horizon condition and weight interval are sufficient worst-case conditions arising from conservative concentration and batching bounds. For our 384-dimensional features and $n_{\max}=5$, the sufficient horizon can exceed the lengths of some experimental sequences. In practice, substantially less exploration may be sufficient. The experiments evaluate JOVE outside the theorem's conservative sufficient regime, while the theorem establishes the existence of verification weights that guarantee sublinear regret under the formal model.

% NEW: explicitly incomplete implementation record, not fabricated procedures.
\subsection{Implementation details}
\label{app:implementation}
\subsubsection{JOVE procedure}
\label{app:algorithm}
\begin{algorithm}[h]
\caption{JOVE}
\label{alg:jove}
\begin{algorithmic}[1]
\Require Planner, fixed verifier, budget $\gamma$, step size $\alpha_\lambda$, regularization $\lambda_{\mathrm{reg}}$, confidence radii $\beta_t$, and verification weight $k_v$. \State Initialize $G_0=\lambda_{\mathrm{reg}}I$, $\hat\theta_1=\mathbf 0$, $\lambda_1=0$, and resource estimates.
\For{each query $x_t$}
    \State Generate DAG $\mathcal G_t$; record planning cost and reserve allocation time.
    \State Use embedding model to form context features $\phi_t^i(a)$ for each node-model pair.
    \State Compute $\mathrm{UCB}_t^i(a)$,
    $I_t^i(a)$, $\hat c_t^i(a,\eta)$, and
    $\hat d_t^i(a)$ using prior observations.
    \State Solve MILP (\eqref{eq:online-milp}) to decide
    $\mathbf z_t$ variables, yielding $(a_t^i,h_t^i)$ for every node.
    \State Execute the DAG following dependencies, launching selected verifications as node outputs become available.
    \State Deliver the final response without waiting for verification.
\State Collect verification labels and update the quality model
via~\eqref{eq:ridge}.
    \State Update resource histories using observed
    execution costs, latencies, and verification costs.
    \State Compute total realized cost
    $c_t=m_t^{\mathrm{plan}}
    +\sum_{i\in\mathcal N_t}m_t^i(a_t^i)
    +\sum_{i:h_t^i=1}\widetilde m_t^i$.
    \State Update
    $\lambda_{t+1}=
    [\lambda_t+\alpha_\lambda(c_t-\gamma)]_+$.
\EndFor
\end{algorithmic}
\end{algorithm}
\newpage
\subsubsection{Planner and verification prompts}
\label{app:prompt}

\begin{lstlisting}[
    basicstyle=\small\rmfamily,
    breaklines=true,
    breakatwhitespace=true,
    breakindent=0pt,
    columns=fullflexible,
    keepspaces=true,
    showstringspaces=false,
    backgroundcolor=\color{gray!5},
    rulecolor=\color{gray!30},
    frame=single,
    frameround=tttt,
    framesep=7pt,
    framerule=0.5pt,
    aboveskip=10pt,
    belowskip=10pt,
    title={\small\bfseries Planner Prompt}
]
You are a planner for benchmark questions. Decompose the user prompt into {node_count} small executable tasks arranged as a DAG. Return JSON only; no markdown or extra text. Do not solve the question, compute the final answer, or include reasoning. For multi-hop factual QA, split anchor/entity lookup from the final attribute lookup. For math, science, multiple-choice, or reasoning questions, split solution reasoning from final answer extraction. Each task needs lowercase fields "id", "description", "predecessors", "type", "output_format", "input_template". "type" must be one of: {task_groups_list}. Do not create task nodes whose purpose is to verify, check, validate, confirm, or audit another task output. Use precise output_format values such as "full name only", "date only", "year only", "number only", "integer only", "option letter only", or "short phrase only". The sink task output_format must satisfy any final-format instruction in the user message. The sink input_template must ask for the original final answer using predecessors. Keep dependencies minimal and acyclic.
Schema: {"tasks":[{"id":"t1","description":"...","predecessors":[],"type":"Anchor Identification","output_format":"full name only","input_template":"..."}]}.
Question: {...}
\end{lstlisting}

\begin{lstlisting}[
    basicstyle=\small\rmfamily,
    breaklines=true,
    breakatwhitespace=true,
    breakindent=0pt,
    columns=fullflexible,
    keepspaces=true,
    showstringspaces=false,
    backgroundcolor=\color{gray!5},
    rulecolor=\color{gray!30},
    frame=single,
    frameround=tttt,
    framesep=7pt,
    framerule=0.5pt,
    aboveskip=10pt,
    belowskip=10pt,
    title={\small\bfseries Verifier Prompt}]
Original user prompt: {prompt}. Task to verify ({task.id}): {instruction}. Output format requirement: {task.output_format}. Executor output: {executor_output}. 
Is the executor output factually correct for this task?
decision=true only if the required result is right given the original prompt and dependencies.
decision=false if a name/date/number/place/option is wrong or invented, the answer is a fluent guess you cannot verify, the output is empty/refusal/off-task, or only the format is right.
If you are unsure, prefer decision=false.
Return JSON only: {"reason": "short reason", "decision": true/false}.
\end{lstlisting}

\subsubsection{Embedding model}
\label{app:embedding}
% For each task-model pair, we embed the task instruction, descriptions of required node outputs, and candidate model description into a feature vector $\phi_t^i(a)\in\mathbb R^d$
We use a frozen text embedding as the feature map for the LinUCB head. Specifically, to construct $\phi_t^i(a)$, we encode the node's description, descriptions of required node outputs, and LLM API description with Sentence-Transformers all-MiniLM-L6-v2~\citep{reimers2019sentence}, which produces a 384-dimensional unit-normalized feature vector. We choose this model because it is small ($\sim$22M parameters) and runs fast even on CPU, reducing the reserved time for executor allocation. 
Larger modern embedding models (e.g., the Qwen3 Embedding family, such as the 0.6B or 4B variants with up to 1024- or 2560-dimensional outputs) may improve retrieval-oriented embedding quality, but they are substantially slower and more memory-intensive to evaluate for every task–model pair, increase the LinUCB feature dimension (and thus the cost of maintaining and inverting the covariance matrix). Moreover, Theorem~\ref{thm:sublinear-regret} gives a larger sufficient horizon and a looser regret bound as $d$ increases, suggesting potentially slower learning. These considerations motivate our choice of a compact encoder.

\subsubsection{Resource estimators}
\label{app:estimators}

JOVE's MILP uses online estimates of expected execution and verification costs, together with executor latency quantiles. For each executor API~$a$, we maintain a bounded history of recent calls,
\[
\Upsilon_{t-1}(a)
\subseteq
\left\{
(\phi_s^j(a),m_s^j(a),d_s^j(a)):
s<t,\ j\in\mathcal N_s,\ a_s^j=a
\right\}.
\]
$\Upsilon_{t-1}(a)$ contains the execution records for model $a$ collected before query $t$. Each record $(\phi,m,d)$ contains the feature vector, realized API cost, and observed latency.
We estimate expected cost and latency quantiles by weighting historical observations according to their feature similarity to the current task.
For a nonempty history, define
\begin{equation}\label{eq:empirical-resource-estimates}
\hat m_{t,\mathrm{emp}}^i(a)
=\!\!\!\!\!\!\!\!\!\!\!
\sum_{(\phi,m,d)\in\Upsilon_{t-1}(a)}\!\!\!\!\!\!\!\!\!\!\! w\left(\phi,\phi_t^i(a)\right) m
\quad \text{with}\quad
w\left(\phi,\phi_t^i(a)\right)
=
\frac{\exp\!\left(\phi_t^i(a)^\top\phi\right)}
{\displaystyle
\!\!\!\!\!\!\!\!\!\!\!\sum_{(\phi',m',d')\in\Upsilon_{t-1}(a)}\!\!\!\!\!\!\!\!\!\!\!
 \exp\!\left(\phi_t^i(a)^\top\phi'\right)}.
\end{equation}
For unit-normalized vectors, the inner product in \eqref{eq:empirical-resource-estimates} is cosine similarity. Thus, $\hat m_{t,\mathrm{emp}}^i(a)$ is a weighted average of the observed costs, with greater weight assigned to calls with features similar to the current task-model pair. Using the same weights, we estimate the latency quantile as
\begin{equation}\label{eq:weighted-latency-quantile}
\hat \ell_{t,\mathrm{emp}}^i(a,\epsilon)
=
\inf\!\big\{
u\geq0:\!\!\!\!\!\!\!\!\!\!\!
\sum_{(\phi,m,d)\in\Upsilon_{t-1}(a)}\!\!\!\!\!\!\!\!\!\!\!
w\!\left(\phi,\phi_t^i(a)\right)\mathbf 1\{d\leq u\}
\geq1-\epsilon
\big\}.
\end{equation}
This is the smallest latency threshold containing at least $1-\epsilon$ of the total observation weight. 
% To stabilize estimates with scarce observations, we initialize them from predefined model-specific baselines and gradually reduce the baseline weight as observations accumulate, approaching the estimates in \eqref{eq:empirical-resource-estimates} and \ref{eq:weighted-latency-quantile}.

To handle sparse or empty histories, we initialize each model with a baseline cost $m_0(a)$ and the baseline latency quantile $\ell_0(a,\delta_t/|\mathcal{N}_t|)$.
These baselines can be informed by platform statistics or at least one preliminary profiling call for each model. 

Let $n_{t-1}(a)=|\Upsilon_{t-1}(a)|$ and define
the baseline weight
\begin{equation}
\rho_{t-1}(a)=\frac{1}{1+n_{t-1}(a)}.
\label{eq:prior-weight}
\end{equation}
For nonempty histories,
$\hat m_{t,\mathrm{emp}}^i(a)$ and
$\hat\ell_{t,\mathrm{emp}}^i(a,\epsilon)$ denote
the feature-weighted empirical estimates, respectively.
The estimates used by JOVE are
\begin{align}
\hat m_t^i(a)
&=
\rho_{t-1}(a)m_0(a)
+
\bigl(1-\rho_{t-1}(a)\bigr)
\hat m_{t,\mathrm{emp}}^i(a),
\label{eq:shrunk-cost}
\\
\hat\ell_t^i(a,\epsilon)
&=
\rho_{t-1}(a)\ell_0(a,\epsilon)
+
\bigl(1-\rho_{t-1}(a)\bigr)
\hat\ell_{t,\mathrm{emp}}^i(a,\epsilon),
\label{eq:shrunk-lat}
\end{align}
with $\hat d_t^i(a)=\hat\ell_t^i(a,\delta_t^i)$.
When the history is empty, we use the baselines directly.
As the retained history grows, the empirical estimates
receive greater weight. The verification-cost estimate
$\hat{\widetilde m}_t^i$ is initialized and shrunk
analogously using observations from completed
verifications.

\subsection{Additional experimental Details}
\label{app:extra-experiments}
For the main experiments in Section~\ref{sec:experiments}, we use the executor pool including Llama-3.2-3B-Instruct, Gemma-3-4B-IT, GPT-OSS-20B, Mistral-Small-3.2-24B-Instruct, Qwen3-32B, Llama-3.3-70B-Instruct, GPT-OSS-120B, and Mistral-Large-2512 from OpenRouter platform. 
The API providers introduce additional randomness of the output quality, latency, and cost into our experiments through fluctuations in KV-cache hit rates, service congestion from concurrent users, network conditions, etc. Network disruptions or other service failures may cause LLM API calls to fail. When a call fails, we treat its output as empty and continue executing the task graph.

\textbf{The selection of the reference model.}
We take Qwen3-32B as the reference model as it sits in the middle of the APIs pool - it is neither a cheap weak model nor a flagship but expensive one. 
The constraints of JOVE are set from unconstrained Qwen3-32B direct calls on Bamboogle, a relatively short multi-hop set with lower cost and latency than MMLU-Pro or GPQA.
Direct Qwen3-32B on Bamboogle costs \(2.00\times 10^{-4}\)~USD and takes \(14.09\)~s on average (Table~\ref{tab:main-results}); we therefore use \(  \gamma = 2.0\times 10^{-4}\ \text{USD/query}, \mu_t = 16\text{~s}.\)
The \(16\)~s deadline is the Bamboogle direct latency with a small slack. Calibrating on this easier set makes the same \((\gamma,\mu_t)\) \emph{tighter} than an average of direct Qwen3-32B cost and latency over all four datasets, where MMLU-Pro and GPQA are several times more expensive and slower.

Although JOVE may assign larger models from the pool, each such call is charged in full: under \(\gamma\) set to midsize spend on a cheap dataset, a 70B/120B assignment on one task node must be offset by cheaper assignments on other nodes and queries.
Therefore, matching accuracy at this operating point is not a free upgrade to stronger LLMs.

We do not pass an explicit thinking/reasoning flag for Qwen3-32B. OpenRouter lists thinking as optional for this model. Downstream providers (DeepInfra, SiliconFlow) may turn thinking on for some queries depending on the difficulty levels.
That behavior, when it occurs, applies to both the 32B baselines and JOVE.
% For LiveBench-Reasoning and GPQA, some models do not consistently follow the required answer format, particularly on complex queries. When the automatic evaluator cannot parse a response, we use Qwen3.8 Max to assess it against the ground-truth answer for the experimental evaluation. 

\subsubsection{Discussion on the learning sequence}
\label{app:sequence}
\begin{table}[h]
\centering
\caption{
Results on MMLU-Pro. Each query generates one task graph in the learning sequence.
}
\label{tab:mmlu}
\small
\setlength{\tabcolsep}{4pt}
\renewcommand{\arraystretch}{1.12}
\begin{tabular}{@{}lccc@{}}
\toprule
Methods
& Accuracy (\%) $\uparrow$
& Latency (sec.) $\downarrow$
& Cost ($\times 10^{-4}$ USD) $\downarrow$ \\
\midrule

\multicolumn{4}{@{}l}{
 \textit{Inference-level baselines (unconstrained)}
} \\[2pt]

\rowcolor{lightgray}
Direct (Qwen3-32B)
& \accstd{47.3}{0.8}
& \meanstd{45.39}{2.29}
& \meanstd{5.34}{0.38} \\

CoT~\citep{wei2022cot}
& \accstd{54.1}{1.1}
& \meanstd{34.11}{4.63}
& \meanstd{5.38}{0.34} \\

SoT~\citep{ning2024sot}
& \accstd{52.0}{3.1}
& \meanstd{70.17}{6.27}
& \meanstd{27.53}{3.71} \\

Plato~\citep{jin2025plato}
& \accstd{52.8}{3.2}
& \meanstd{104.20}{5.21}
& \meanstd{22.23}{2.02} \\

\midrule
\multicolumn{4}{@{}l}{
 \textit{Resource-aware methods (constrained)}
} \\[2pt]

WR-Online~\citep{li2026wiserouter}& \accstd{43.7}{4.1}& \meanstd{22.16}{2.33} & \meanstd{2.04}{0.04} \\
Knapsack-ascend & \accstd{39.1}{3.6}& \meanstd{5.88}{0.82}& \meanstd{1.70}{0.18} \\
Knapsack-descend& \accstd{42.3}{3.8}& \meanstd{7.59}{1.26}& \meanstd{1.74}{0.10} \\
\rowcolor{lightgray} JOVE single-node& \accstd{47.1}{13.1} & \meanstd{2.88}{0.71}& \meanstd{1.75}{0.13} \\
\rowcolor{highlight} \textbf{JOVE} & \accstd{\textbf{55.8}}{0.9} & \meanstd{7.32}{0.86}& \meanstd{2.13}{0.32} \\ \hline

\bottomrule
\end{tabular}%
\end{table}

As described in Section~\ref{sec:experiments}, we generate four task graphs per query, with 2, 3, 4, and 5 nodes. This yields 500 query–graph instances for Bamboogle, 48,000 for MMLU-Pro, 800 for LiveBench-Reasoning, and 1,792 for GPQA. We randomly shuffle these instances to form the online sequence. Varying the graph size exposes JOVE to different task structures and provides more decision rounds for learning, particularly on smaller datasets like Bamboogle (only 125 queries). 
Different graph sizes can change how a query is decomposed, for example, a larger graph may contain finer-grained subtasks, yielding different task graphs (different nodes) for the same query. 
We notice that for simple queries (e.g., two or three nodes are enough to clearly decompose the problem), similar nodes may appear across the four generated task graphs. 
However, this does not create direct answer leakage: JOVE neither stores nor reuses executor outputs or ground-truth answers, and every graph is executed independently. Verification labels only update the shared task–model quality estimator $\hat{\theta}_t$. 
Thus, any benefit arises through feature-based generalization across related subtasks, which is the intended learning mechanism rather than memorization of previous answers.

Table~\ref{tab:mmlu} evaluates the learning sequence in which each MMLU-Pro query appears only once, with the planner choosing its graph size. The inference-level baselines retain their results from Table~\ref{tab:main-results} because they process each query directly. JOVE achieves 55.8\% accuracy, compared with 54.8\% in the four-graph setting, with similar cost and latency. 
JOVE’s slightly higher accuracy may reflect the planner’s freedom to choose graph size. An appropriately sized graph can avoid overly broad subtasks that require expensive executors, as well as unnecessary nodes that consume budget and latency.
The other resource-aware methods likewise show no clear accuracy decline relative to their reported variability. 
The result suggests that JOVE's performance on MMLU-Pro does not depend on encountering multiple graphs for the same query. We use four graph sizes in the main experiments to provide more learning rounds, particularly for smaller datasets; the one-graph experiment provides a check under a sequence of distinct queries.

\subsubsection{Satisfaction of the constraints}
\label{app:constraint-results}
\textbf{Deadline hit rate.}
JOVE has the latency constraint:
\[
\Pr_{\pi}\!\left(L_t\leq\mu_t-d_t^{\mathrm{plan}}\,\middle|\,\mathcal{H}_t,\mathbf{a}_t\right)\geq1-\delta_t
\]
which indicates the end-to-end latency (from the start of task graph generation, to the end of the task graph execution) should be less than the target deadline $\mu_t$ with probability at least $1-\delta_t$. 

For the main experiment (Table~\ref{tab:main-results}), we set $\mu_t=16$s and $\delta_t=0.1$ for all task graphs.
Notice that JOVE supports query-specific deadlines, which users can set according to their latency needs.
We use a common deadline across queries to provide a consistent latency target for evaluation.
% We use a common deadline across queries in the experiments to ensure feasibility for all queries. 
% Setting the same $\mu_t$ across all queries does not impact the policy learning, as JOVE directly learns the distribution of each model's output latency. 

Table~\ref{tab:latency} reports the average deadline hit rate over successive rounds of online learning. 
In the final interval, every dataset reaches at least 96.0\%, exceeding the 90\% hit rate target.
MMLU-Pro, LiveBench-Reasoning, and GPQA initially fall below the 90\% target, potentially because limited call histories yield inaccurate node-level quantile estimates.
With more queries executed, each LLM has more latency samples to construct its context-based latency quantiles (detailed in Section~\ref{app:estimators}), providing precise latency estimations for the per-query MILP to decide the executor allocations. 
We notice that across all intervals, LiveBench-Reasoning remains the lowest deadline hit rate among all datasets. This can be explained by the fact that LiveBench-Reasoning queries are longer, which can increase LLM API processing time and require models to reason more, yielding high variance of the execution time.

\begin{table}[h]
\centering
\caption{Deadline hit rate (\%) across successive intervals. The target is 90\%.}
\label{tab:latency}
\begin{tabular}{@{}lllll@{}}
\toprule
Datasets            & $t\in[1,50]$ & $t\in[51,100]$ & $t\in[101,200]$ & $t\in[201,\text{end}]$ \\ \midrule
Bamboogle & $96.8 \pm 4.0$ & $98.2 \pm 1.6$ & $98.1 \pm 2.1$ & $98.6 \pm 1.4$                                  \\
MMLU-Pro & $75.6 \pm 8.8$ & $93.4 \pm 3.6$ & $95.8 \pm 1.7$ & $98.4 \pm 0.4$                       \\
LiveBench-Reasoning & $56.9 \pm 10.0$ & $89.7 \pm 4.3$ & $94.7 \pm 4.0$ & $96.0 \pm 2.1$      \\
GPQA & $72.9 \pm 12.2$ & $91.6 \pm 5.6$ & $95.7 \pm 1.3$ & $98.7 \pm 1.3$    \\ \bottomrule
\end{tabular}
\end{table}

\begin{table}[h] \centering \caption{Average cost ($\times 10^{-4}$USD) across successive intervals. $\gamma=2\times 10^{-4}$ USD.} \label{tab:cost} \begin{tabular}{@{}lcccc@{}} 
\toprule Datasets & $t\in[1,50]$ & $t\in[51,100]$ & $t\in[101,200]$ & $t\in[201,\text{end}]$ \\ 
\midrule 
Bamboogle & $11.29\pm 3.3$ & $1.83\pm 0.3$ & $1.38\small{\pm 0.2}$ & $0.98\small{\pm 0.2}$ \\ 
MMLU-Pro & $23.09\pm12.4$ & $7.65\pm4.1$ & $3.24\pm1.2$ & $1.99\pm0.1$ \\ 
LiveBench-Reasoning & $6.15\pm2.0$ & $2.39\pm1.3$ & $1.97\pm0.5$ & $1.97\pm0.4$ \\
GPQA & $21.35\pm4.6$ & $8.74\pm3.2$ & $3.05\pm1.0$ & $1.66\pm0.1$ \\ 
\bottomrule 
\end{tabular} \end{table}

\textbf{Average cost results.}
The long-term cost constraint is defined as
\[
\limsup_{T\to\infty}\frac1T\sum_{t=1}^T
 \mathbb E_\pi[C_t(\mathbf{a}_t,\mathbf{h}_t)]\leq\gamma.
\]
JOVE uses a dynamic Lagrange multiplier $\lambda_t$ to respond to budget overspending. When cost exceeds the target, $\lambda_t$ rises and makes expensive LLMs less attractive in subsequent decisions. The overall average costs in Table~\ref{tab:main-results} slightly exceed $\gamma=2\times10^{-4}$ USD per query because JOVE spends more in early rounds, when $\lambda_t$ is still small. Table~\ref{tab:cost} shows how this behavior changes over time: interval average costs decline as JOVE adjusts its selections. 
These finite-horizon results illustrate JOVE’s budget-control behavior, complementing the asymptotic budget guarantee established for the oracle policy in Proposition~\ref{prop:dual-budget}.

In early rounds, sparse call histories can cause JOVE to underestimate costs or high-percentile latencies. Cost can also exceed the target while the multiplier, initialized at $\lambda_1=0$, adjusts to observed spending. As more calls are observed, the estimates improve: in our experiments, interval-average costs approach the budget, and aggregate deadline hit rates exceed the 90\% target in the final interval. These results provide empirical support for JOVE’s deadline control as its estimates improve; establishing a conditional guarantee for every query under estimated quantiles remains future work.

The paper's focus is the reduction of the graph-level latency requirement to linear MILP constraints and the joint allocation of execution and paid verification. The resource estimators are modular and can be replaced by offline-trained or more advanced predictors. For example, offline profiling could provide better estimates at the start of learning.

\paragraph{Feasibility in the analysis and implementation.}
Our analysis characterizes resource feasibility and quality learning through complementary oracle and online settings.
For the oracle budget argument in Proposition~\ref{prop:dual-budget}, we require a stationary policy that selects allocations in $\mathcal F_t(\mathbf d_t)$ while keeping expected total cost, including planning, execution, and verification, at most $\gamma$.
This condition places the comparison policy within the same conservative node-quantile constraints used by the oracle MILP, ensuring that it is available to the optimizer.
For online quality learning, Assumption~\ref{ass:regret} requires $\mathcal F_t(\hat{\mathbf d}_t)$ to be nonempty at each round and the online MILP to be solved to optimality.
Giving the comparator the same resource estimates and budget prices as JOVE allows us to isolate the contribution of quality learning.
We acknowledge that feasibility under estimated resource coefficients alone does not establish the true latency chance constraint or long-term budget feasibility; extending the oracle resource guarantees to the implemented policy requires additional control of estimation error.
In particular, the latency argument in Appendix~\ref{app:latency-proof} applies whenever the online MILP is feasible and its estimated node durations upper-bound the corresponding true conditional latency quantiles at the assigned risk levels.
We complement these analytical results with empirical measurements of budget use and deadline hit rates, providing evidence of resource control in the evaluated workloads and motivating further analysis of online resource estimation.

\subsubsection{Additional ablations}
\label{app:extra-ablation}

\textbf{Ablation on cost and latency constraints.}
Figure~\ref{fig:constraint-ablation} examines how the resource constraints affect JOVE's final-answer accuracy. As shown in Figure~\ref{fig:budget-ablation}, increasing the cost budget generally improves accuracy because JOVE can select stronger, more expensive LLMs and invoke verification more frequently. 
Similarly, Figure~\ref{fig:latency-ablation} shows that a more relaxed latency deadline expands the set of feasible allocations, allowing JOVE to use slower but more capable LLMs. Under tighter deadlines, it instead favors faster executors, resulting in a modest reduction in accuracy. Overall, these results demonstrate that the cost budget and latency deadline provide direct and interpretable controls over the accuracy--resource trade-off.

\begin{figure}[H] \centering \begin{subfigure}{0.45\textwidth} \centering \includegraphics[width=\linewidth]{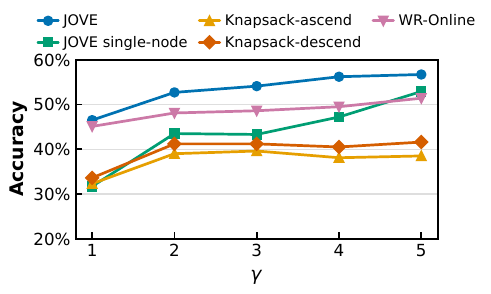} \caption{Final-answer accuracy under different API-cost budgets $\gamma$ ($\times 10^{-4}$USD).} \label{fig:budget-ablation} \end{subfigure} \hfill \begin{subfigure}{0.45\textwidth} \centering \includegraphics[width=\linewidth]{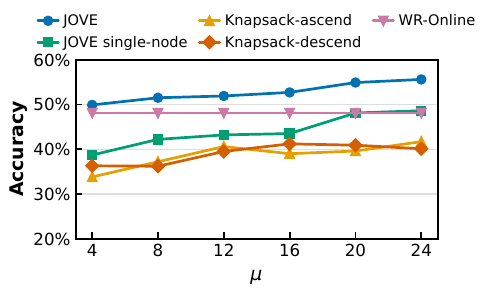} \caption{Final-answer accuracy under different latency deadlines (sec.).} \label{fig:latency-ablation} \end{subfigure} \caption{Ablation of the cost and latency constraints on GPQA.} \label{fig:constraint-ablation} 
\end{figure}

\subsubsection{Comparison with random verification}
\label{app:verification}
To isolate the benefit of the proposed verification strategy (Remark~\ref{remark:verification-rule}), we compare JOVE with a random-verification baseline. This baseline uses the same per-query MILP for executor allocation but removes the information-gain bonus from the objective and selects each node for verification with probability $p$. Both methods use the same cost budget and latency constraint.

\begin{table}[h]
\centering
\caption{Comparison of verification strategies on Bamboogle.}
\label{tab:verify}
\begin{tabular}{@{}cccc@{}}
\toprule
Methods             & Accuracy(\%)                  & Latency (s)                    & Cost ($\times 10^{-4}$USD)                       \\ \midrule
$p=0.2$ & $33.6 \pm 1.7$ & $4.23 \pm 0.83$ & $2.06 \pm 0.11$  \\
$p=0.5$ & $34.8 \pm 1.2$ & $3.27 \pm 0.35$ & $2.42 \pm 0.24$  \\
$p=0.8$ & $42.3 \pm 0.9$ & $3.09 \pm 0.72$ & $2.35 \pm 0.14$  \\
$p=0.9$ & $37.1 \pm 1.4$ & $3.02 \pm 0.40$ & $2.37 \pm 0.12$  \\
$p=1.0$ & $38.5 \pm 1.2$ & $4.00 \pm 0.39$ & $2.09 \pm 0.09$  \\
JOVE    & $49.3 \pm 1.0$ & $3.68 \pm 0.48$ & $2.18 \pm 0.18$ \\ \bottomrule
\end{tabular}
\end{table}

Table~\ref{tab:verify} shows that JOVE outperforms random verification across different probabilities, improving accuracy by 7.0 percentage points over the best random baseline ($p=0.8$). For the random baseline, low verification probabilities provide limited feedback, which can slow quality learning; frequent verification can consume too much budget that would otherwise support more capable executors, potentially explaining the lower accuracy at $p=0.9$ and $p=1.0$.
These results support selecting verification based on its information gain and cost rather than using a fixed probability.

\paragraph{Verifier reliability and the theory--experiment gap.}
Our benchmarks provide reference answers for complete queries (final responses), but not correctness labels for the intermediate subtasks generated by the planner.
Therefore, we do not have the ground truth of the node outputs. 
We acknowledge that the conditionally unbiased feedback assumed in Assumption~\ref{ass:quality} is not established for the experimental verifier.
This assumption allows verification noise to average out rather than systematically shift the estimated quality, supporting the confidence bounds around true expected correctness used in the regret proof.
Figure~\ref{fig:nodes-correctness} illustrates the practical setting: even when all five verified nodes are labeled correct, final-answer accuracy is 87.3\%.
This gap is consistent with verifier noise or imperfect alignment between node-level success criteria and final-answer correctness, rather than a direct estimate of label error rates.
Nevertheless, final-answer accuracy decreases as the number of verifier-labeled incorrect nodes increases, suggesting that the feedback carries useful quality information.
Together with JOVE's competitive final-answer accuracy and its advantage over random verification in Table~\ref{tab:verify}, these results provide empirical support that learning remains effective with the available, potentially imperfect feedback in the evaluated workloads.

\subsection{Enhanced JOVE}

\subsubsection{Joint optimization of node risk allocation}
\label{app:risk-allocation}

The main paper's formulation fixes node risk shares before optimizing the allocation. We extend it to select these shares jointly with the executor and verification decisions. This allows nodes on bottleneck paths to receive larger tolerances, reducing their required latency quantiles, while nodes on branches with slack can accommodate smaller tolerances. All paths remain subject to the response deadline.

\paragraph{Discrete risk levels.}
Before allocation, choose a small grid of risks
\[
\mathcal{Q}_t=\{r_{t,1},\ldots,r_{t,K}\}\subset(0,\delta_t].
\]
These grid levels $r_{t,k}$ can be selected from breakpoints of the empirical node-level quantile records, which are piecewise constant in the risk tolerance. For each node, executor, and risk level, we have the latency quantile
\[
\ell_t^i(a,r_{t,k})=\inf\!\left\{
d\geq0:
\Pr\!\left(D_t^i(a)\leq d | \mathcal{H}_t \right)
\geq1-r_{t,k}
\right\}.
\]
In the online implementation, these coefficients are replaced by $\hat\ell_t^i(a,r_{t,k})$, using the same estimator and observation histories as the main algorithm.

\paragraph{MILP formulation.}
To optimize the allocation of risk levels across the graph, 
we replace $z_t^i(a,\eta)$ with binary variables $x_t^i(a,\eta,k)$,
where $x_t^i(a,\eta,k)=1$ selects executor $a$, verification decision
$\eta$, and risk level $r_{t,k}$ at node $i$.
The original allocation and the selected risk share are recovered as
\[
z_t^i(a,\eta)=\sum_{k=1}^{K}x_t^i(a,\eta,k),
\qquad
\delta_t^i=
\sum_{a,\eta,k}r_{t,k}x_t^i(a,\eta,k).
\]
Let $\mathbf{x}_t=\{x_t^i(a,\eta,k)\}_{a\in\mathcal{A}_t^i, \eta\in\{0,1\},k=1,\dots,K}$.
Define the allocated node duration as
\[
v_t^i(\mathbf{x}_{t})=\sum_{a\in\mathcal{A}_t^i}\sum_{\eta\in\{0,1\}}\sum_{k=1}^{K}
\ell_t^i(a,r_{t,k})
x_t^i(a,\eta,k).
\]
The extended problem is
\begin{equation}
\label{eq:risk-allocation-milp}
\begin{aligned}
\max_{\mathbf{x}_t,\mathbf{F}_t}
& 
\sum_{i\in\mathcal N_t}
\sum_{a\in\mathcal A_t^i}
\sum_{\eta\in\{0,1\}}\sum_{k=1}^K
\left[
\frac{1}{|\mathcal{N}_t|}(\mathrm{UCB}_t^i(a)
+
k_v\eta I_t^i(a))
-
 \lambda_t\hat c_t^i(a,\eta)
\right]x_t^i(a,\eta,k)
\\
\text{s.t.}\quad
& \sum_{a,\eta,k}x_t^i(a,\eta,k)=1,
\qquad \forall i\in\mathcal N_t,\\
&\sum_{i,a,\eta,k}r_{t,k}x_t^i(a,\eta,k)\le\delta_t,\\
& F_t^i\ge v_t^i(x),
\qquad \forall i\in\mathcal S_t,\\
& F_t^i\ge F_t^j+v_t^i(x),
\qquad \forall(j,i)\in\mathcal E_t,\\
& F_t^i\le\mu_t-d_t^{\mathrm{plan}},
\qquad \forall i\in\mathcal T_t,\\
& F_t^i\ge0,
\qquad \forall i\in\mathcal N_t,\\
& x_t^i(a,\eta,k)\in\{0,1\},
\qquad \forall i,a,\eta,k.
\end{aligned}
\end{equation}
All objective and constraint coefficients are fixed before solving,
so \eqref{eq:risk-allocation-milp} remains a MILP, which is solvable using the same procedure as JOVE. 
The risk grid increases the number of binary variables by a factor
of $K$ and adds one global risk-budget constraint.

This extended version of JOVE incorporates node-level latency risk allocation directly into the per-query MILP. Compared with the uniform allocation of $\delta_t^i$ used in Section~\ref{sec:method-milp}, optimizing the node-level risks inside the MILP generally yields tighter (less conservative) latency bounds, because the risk budget is concentrated on the critical path according to the per-node latency estimates. For instance, if one source–sink path contains a single node while another contains four, there is little benefit to spending risk budget on the single-node path; the budget is better spent loosening the bounds of the nodes that actually determine end-to-end latency.

% \paragraph{Empirical comparison.}
% We compare uniform and jointly optimized risk allocation using
% the same resource estimates, objective, and budget and latency
% requirements. The results show 

\newpage

\subsection{Notation}
\label{app:notation}
 
Tables~\ref{tab:notation} and~\ref{tab:notation-learning} collect the symbols used throughout the paper, grouped by
role: the query and its task graph, the allocation decisions, the stochastic
outcomes of execution and verification, the oracle statistics of
Section~\ref{sec:method-milp}, their online estimates from
Section~\ref{sec:method-estimation}, the resource constraints and control
parameters, and the constants appearing in the analysis of
Section~\ref{sec:method-theory}. Superscript $i$ indexes a node, subscript $t$
indexes a query, and $a$ denotes a candidate executor. A bar ($\bar\cdot$)
marks an expected or oracle quantity and a hat ($\hat\cdot$) marks its online
estimate.
 
\begin{table}[ht]
\centering
\caption{Notation for the task graph, decisions, outcomes, and oracle statistics.}
\label{tab:notation}
\renewcommand{\arraystretch}{1.05}
\resizebox{0.95\linewidth}{!}{%
\begin{tabular}{@{}p{0.28\textwidth}p{0.70\textwidth}@{}}
\toprule
\textbf{Symbol} & \textbf{Description} \\
\midrule
\multicolumn{2}{@{}l}{\textit{Query and task graph}}\\
$t$, $T$ & Query index and horizon (number of queries). \\
$x_t$ & The $t$-th query. \\
$\mathcal G_t=(\mathcal N_t,\mathcal E_t)$ & Planner-generated task graph (DAG) for query $x_t$. \\
$\mathcal N_t$, $|\mathcal N_t|$ & Node (subtask) set and its cardinality. \\
$\mathcal E_t$ & Edge set; $(j,i)\in\mathcal E_t$ means node $i$ requires the output of node $j$. \\
$\mathcal S_t$, $\mathcal T_t$ & Source and sink nodes of $\mathcal G_t$. \\
$\mathcal P(\mathcal G_t)$ & Set of source-to-sink paths in $\mathcal G_t$. \\
$\mathcal A_t^i$ & Candidate executor set at node $i$, chosen by the planner. \\
$\pi$ & Online allocation policy. \\
$\mathcal H_t$ & Information available before allocating query $x_t$.\\
\addlinespace
 
\multicolumn{2}{@{}l}{\textit{Allocation decisions}}\\
$a_t^i$, $\mathbf a_t$ & Executor selected at node $i$; the vector of executors for query $t$. \\
$h_t^i\in\{0,1\}$, $\mathbf h_t$ & Verification indicator at node $i$; the vector of verification decisions. \\
$\eta\in\{0,1\}$ & Verification value in the joint encoding. \\
$z_t^i(a,\eta)$, $\mathbf z_t$ & Binary indicator that node $i$ uses LLM $a$ with verification decision $\eta$. \\
$F_t^i$ & Finish time of node $i$, relative to the start of graph execution. \\
$v_t^i(\mathbf z_t^i)$ & Latency quantile selected at node $i$ by the allocation. \\
$\mathcal F_t(\mathbf d_t)$ & Feasible set of assignment and finish-time variables induced by node durations $\mathbf d_t$. \\
\addlinespace
 
\multicolumn{2}{@{}l}{\textit{Stochastic outcomes (uppercase) and their realizations (lowercase)}}\\
$Y_t^i(a)\in\{0,1\}$ & Latent correctness of node $i$ executed by model $a$; not observed from execution. \\
$\widehat Y_t^i\in\{0,1\}$ & Binary label returned by the fixed verifier when $h_t^i=1$. \\
$M_t^i(a)$, $m_t^i(a)$ & Stochastic and realized API cost of executing node $i$ with model $a$. \\
$\widetilde M_t^i$, $\widetilde m_t^i$ & Stochastic and realized verification cost at node $i$. \\
$D_t^i(a)$, $d_t^i(a)$ & Stochastic and realized execution latency of node $i$ under model $a$. \\
$M_t^{\mathrm{plan}}$, $m_t^{\mathrm{plan}}$ & Stochastic and realized API cost of task-graph generation ($0$ for a local planner). \\
$d_t^{\mathrm{plan}}$ & Graph-generation plus the time reserved for allocation. \\
$L_t$ & Graph execution latency, equal to the critical-path duration. \\
$C_t(\mathbf a_t,\mathbf h_t)$, $c_t$ & Total and realized API cost of processing query $x_t$ (planning, execution, verification). \\
\addlinespace
 
\multicolumn{2}{@{}l}{\textit{Oracle statistics (Section~\ref{sec:method-milp})}}\\
$\bar y_t^i(a)$ & Expected node correctness, $\mathbb E[Y_t^i(a)]$. \\
$\bar c_t^i(a,\eta)$ & Expected execution plus verification cost at node $i$. \\
$\ell_t^i(a,\epsilon)$ & Smallest latency met with probability at least $1-\epsilon$. \\
$\bar d_t^i(a)=\ell_t^i(a,\delta_t^i)$, $\mathbf d_t$ & Node-level latency quantile at risk share $\delta_t^i$; the collection over all $(i,a)$. \\
\bottomrule
\end{tabular}%
}
\end{table}

\begin{table}[ht]
\centering
\caption{Notation for online learning, resource control, and the analysis.}
\label{tab:notation-learning}
\renewcommand{\arraystretch}{1.05}
\resizebox{0.95\linewidth}{!}{%
\begin{tabular}{@{}p{0.28\textwidth}p{0.70\textwidth}@{}}
\toprule
\textbf{Symbol} & \textbf{Description} \\
\midrule
\multicolumn{2}{@{}l}{\textit{Online estimation (Section~\ref{sec:method-estimation})}}\\
$\phi_t^i(a)\in\mathbb R^d$, $d$ & Feature vector of the task--model pair and its dimension. \\
$\mathcal D_{t-1}$ & Feature--label pairs obtained through verification before query $t$. \\
$\Upsilon_{t-1}(a)$ & Execution records $(\phi,m,d)$ for model $a$ collected before query $t$. \\
$w(\phi,\phi_t^i(a))$ & Softmax similarity weight assigned to a past record. \\
$G_{t-1}$, $G_{t-1}^{+}$ & Ridge information matrix before query $t$, and after adding one verified observation. \\
$\hat\theta_t$ & Ridge-regression estimate of the quality parameter. \\
$u_t^i(a)$ & Uncertainty score, $\lVert\phi_t^i(a)\rVert_{G_{t-1}^{-1}}$. \\
$\mathrm{UCB}_t^i(a)$ & Optimistic quality estimate, clipped to $[0,1]$. \\
$I_t^i(a)$ & D-optimal information gain from verifying $(i,a)$. \\
$\hat m_t^i(a)$, $\hat{\widetilde m}_t^i$ & Estimated execution and verification cost. \\
$\hat\ell_t^i(a,\epsilon)$, $\hat d_t^i(a)$, $\hat{\mathbf d}_t$ & Estimated latency quantile, node duration at risk $\delta_t^i$, and their collection. \\
$\hat c_t^i(a,\eta)$ & Estimated execution plus verification cost, $\hat m_t^i(a)+\eta\hat{\widetilde m}_t^i$. \\
$n_{t-1}(a)$, $\rho_{t-1}(a)$ & Size of the retained history for model $a$ and the corresponding baseline weight. \\
$m_0(a)$, $\ell_0(a,\epsilon)$ & Model-specific baseline cost and latency quantile used before observations accumulate. \\
\addlinespace
 
\multicolumn{2}{@{}l}{\textit{Resource constraints and control parameters}}\\
$\gamma$ & Long-run average API-cost budget per query. \\
$\mu_t$ & Response deadline for query $t$. \\
$\delta_t$, $\delta_t^i$ & Latency violation tolerance for query $t$ and the risk share assigned to node $i$. \\
$\lambda_t$, $\alpha_\lambda$ & Budget shadow price and its step size. \\
$\lambda_{\mathrm{reg}}$ & Ridge regularization parameter. \\
$\beta_t$ & Confidence radius scaling the uncertainty bonus. \\
$k_v\geq0$ & Verification weight on the information-gain bonus. \\
\addlinespace
 
\multicolumn{2}{@{}l}{\textit{Analysis (Section~\ref{sec:method-theory} and Appendix~\ref{app:regret-proof})}}\\
$\theta^\star$ & True quality parameter, $\bar y_t^i(a)=\phi_t^i(a)^\top\theta^\star$. \\
$L$, $S$ & Bounds on $\lVert\phi_t^i(a)\rVert_2$ and $\lVert\theta^\star\rVert_2$. \\
$\varepsilon_t^i$, $R$ & Verification label noise and its sub-Gaussian parameter. \\
$\delta_{\mathrm{conf}}$ & Failure probability of the confidence event. \\
$n_{\max}$, $\lambda_{\max}$, $\widetilde m_{\max}$ & Bounds on nodes per query, budget price, and estimated verification cost. \\
$N=Tn_{\max}$ & Upper bound on the total number of executed nodes over the horizon. \\
$c_v=\lambda_{\max}\widetilde m_{\max}$ & Upper bound on the priced verification cost. \\
$C_{\mathrm{bat}}$, $\Gamma_T$ & Batching constant and cumulative-uncertainty bound. \\
$\tau_T$, $K_T$ & Uncertainty threshold and lower endpoint of the admissible verification-weight interval. \\
$\mathcal V_t$, $b_t$ & Nodes selected for verification at query $t$ and their number. \\
$J_t(\mathbf z_t)$, $\mathbf z_t^\star$ & Comparator objective with true qualities, and its maximizer. \\
$\mathcal R_T$ & Quality-learning regret, $\sum_{t=1}^T[J_t(\mathbf z_t^\star)-J_t(\mathbf z_t)]$. \\
\bottomrule
\end{tabular}%
}
\end{table}

\stopcontents[appendix]

\end{document}